\newcommand{\R}{\mathbb{R}}
\newcommand{\E}{\mathbb{E}}
\newcommand{\diag}{\operatorname{diag}}
\let\csname algorithm*\endcsname\relax
\let\csname endalgorithm*\endcsname\relax
\let\c@algorithm\relax
\theoremstyle{plain}
\newtheorem{theorem}{Theorem}[section]
\newtheorem{proposition}[theorem]{Proposition}
\newtheorem{lemma}[theorem]{Lemma}
\theoremstyle{definition}
\newtheorem{definition}[theorem]{Definition}
\theoremstyle{remark}
\newtheorem{remark}[theorem]{Remark}
\newcounter{savedtheoremcounter}
\newcounter{manualrestatementcounter}
\newenvironment{manualtheoremrestatement}[1]{%
  \setcounter{savedtheoremcounter}{\value{theorem}}%
  \stepcounter{manualrestatementcounter}%
  \begingroup
  \begin{theorem}%
}{%
  \end{theorem}%
  \endgroup
  \setcounter{theorem}{\value{savedtheoremcounter}}%
}
\icmltitlerunning{Reverse Flow Matching}
\begin{document}

\twocolumn[
  \icmltitle{Reverse Flow Matching: A Unified Framework for\\Online Reinforcement Learning with Diffusion and Flow Policies}

  \icmlsetsymbol{equal}{*}

  \begin{icmlauthorlist}
    \icmlauthor{Zeyang Li}{mit}
    \icmlauthor{Sunbochen Tang}{mit}
    \icmlauthor{Navid Azizan}{mit}
  \end{icmlauthorlist}

  \icmlaffiliation{mit}{Massachusetts Institute of Technology}

  \icmlcorrespondingauthor{Zeyang Li}{zeyang@mit.edu}

  \icmlkeywords{Reinforcement learning, generative models.}

  \vskip 0.3in
]

\printAffiliationsAndNotice{}

\begin{abstract}
  Diffusion and flow policies are gaining prominence in online reinforcement learning (RL) due to their expressive power, yet training them efficiently remains a critical challenge. A fundamental difficulty that distinguishes online RL from standard generative modeling is the lack of direct samples from the target Boltzmann distribution defined by the Q-function. To address this, two seemingly distinct families of methods have been proposed for diffusion policies: a noise-expectation family, which uses a weighted average of noise as the training target, and a gradient-expectation family, which employs a weighted average of Q-function gradients. However, it remains unclear how these objectives are formally related, or whether they can be synthesized into a more general formulation. In this paper, we propose a unified framework, reverse flow matching (RFM), which rigorously addresses the problem of training diffusion and flow models without direct target samples. By adopting a reverse inferential perspective, we formulate the training target as a posterior mean estimation problem given an intermediate noisy sample. Crucially, we introduce Langevin Stein operators to construct zero-mean control variates, deriving a general class of estimators that share the same expectation. We show that existing noise-expectation and gradient-expectation methods are simply two specific instances within this broader class. This unified view yields two key advancements: it extends the capability of targeting Boltzmann distributions from diffusion to flow policies, and it enables the principled combination of Q-value and Q-gradient information to form an effective estimator, thereby improving training efficiency and stability. We instantiate RFM to train a flow policy in online RL and demonstrate improved performance on continuous-control benchmarks compared to diffusion policy baselines.
\end{abstract}

\section{Introduction}

Diffusion \cite{sohl2015deep, ho2020denoising} and flow \cite{lipman2023flow} have emerged as powerful frameworks for generative modeling, revolutionizing various domains such as image synthesis \cite{rombach2022high, esser2024scaling} and video generation \cite{ho2022video, jin2025pyramidal}. Their ability to model complex, high-dimensional distributions makes them particularly appealing for decision making, where policies often require rich expressiveness to capture multi-modal behaviors in challenging environments. Diffusion and flow policies have demonstrated remarkable success in imitation learning and offline reinforcement learning (RL) \cite{chi2025diffusion, ding2025fast, wang2023diffusion, park2025flow}, benefiting from the direct availability of expert demonstrations or pre-collected datasets.

However, harnessing diffusion and flow policies in online RL poses significant challenges. A fundamental difficulty distinguishing online RL from standard generative modeling is the lack of direct samples from the target action distribution. In the maximum entropy RL framework \cite{haarnoja2018soft}, the improved policy is defined by a Boltzmann distribution over actions, $\pi_{\text{new}}(a\mid s) \propto \exp\left(\frac{1}{\lambda} Q(s,a)\right)$, which is unnormalized and generally intractable to sample from directly. This contrasts sharply with standard generative modeling where training data is readily available. Although some approaches attempt to circumvent this challenge using alternative objectives, they are often hindered by high computational costs, numerical instability, or biased estimators. These limitations lead to suboptimal performance, ultimately restricting the full potential of diffusion and flow policies in online RL.
Among methods that do attempt to target the Boltzmann distribution, two seemingly distinct families have been proposed for diffusion policies. The noise-expectation approach \cite{ma2025efficient, dong2025maximum} constructs training targets via self-normalized importance sampling (SNIS) of noise, utilizing exponentiated Q-values as weights. The gradient-expectation approach \cite{akhound2024iterated, jain2025sampling}, instead performs SNIS over the gradient of the Q-function. Although both have shown empirical promise, it remains unclear how these objectives relate formally or if they can be synthesized into a more general formulation. Additionally, existing derivations are often rigidly coupled with specific noise schedules (e.g., variance-preserving or variance-exploding), which tends to obscure the underlying principles. Furthermore, these methods have been limited to diffusion policies, leaving the effective training of flow policies to sample from Boltzmann distributions as an open problem.

In this paper, we propose a unified framework, reverse flow matching (RFM), which rigorously addresses the problem of training diffusion and flow models without direct target samples. Our contributions are summarized as follows.
\begin{enumerate}
    \item We propose a tractable RFM loss that transforms the intractable situation of unavailable target samples into a posterior mean estimation problem.
    \item We introduce Langevin Stein operators to construct zero-mean control variates and derive a general class of posterior mean estimators. We show that existing noise-expectation and gradient-expectation methods are two specific instances within this broader class. This framework enables the principled combination of Q-value and Q-gradient information to obtain an effective variance-reduced estimator, thereby improving training efficiency and stability.
    \item This unification extends the capability of targeting Boltzmann distributions from diffusion to flow policies. Since flow models admit general source distributions beyond the standard Gaussian, they offer greater flexibility and open the door to incorporating domain knowledge through tailored source choices.
    \item We instantiate RFM to train a flow policy in online RL and demonstrate its superior performance on continuous-control benchmarks compared to diffusion policy baselines.
\end{enumerate}

\section{Related Works}

This section discusses prior works on diffusion and flow policies for online RL. We group existing methods into four categories: (1) optimization via differentiable sampling~\cite{wang2024diffusion, lv2025flowbased, celik2025dime}; (2) sampling via Langevin dynamics~\cite{psenka2024learning, ishfaq2025langevin}; (3) iterative weighted regression~\cite{ding2024diffusion, fan2025online, ma2025efficient}; and (4) targeting Boltzmann distributions~\cite{akhound2024iterated, jain2025sampling, ma2025efficient, dong2025maximum}. A more detailed discussion is provided in Appendix~\ref{app:related_works}.

\section{Background}

\subsection{Reinforcement Learning}
Reinforcement learning (RL) is grounded in the Markov decision process (MDP) framework. An MDP is specified by the tuple $(\mathcal{S}, \mathcal{A}, P, r, d_0,\gamma)$, where $\mathcal{S}$ is the state space, $\mathcal{A}$ is the action space, $P: \mathcal{S}\times \mathcal{A} \to \Delta(\mathcal{S})$ is the transition kernel, $r:\mathcal{S}\times \mathcal{A} \to \R$ is the reward function, $d_0\in \Delta(\mathcal{S})$ is the initial state distribution, and $\gamma\in(0,1)$ is the discount factor. $\Delta(\cdot)$ denotes the set of probability distributions over its argument. A policy $\pi:\mathcal{S}\to \Delta(\mathcal{A})$ maps each state to a distribution over actions, and $\pi(a\mid s)$ denotes the probability of taking action $a$ at state $s$ under policy $\pi$. The goal of RL is to find an optimal policy $\pi^*$ that maximizes the expected cumulative discounted reward.

The maximum entropy RL framework augments the standard RL objective with an entropy regularization term to encourage exploration and improve policy robustness. The objective is defined as
\begin{equation}
\nonumber
J(\pi) = \mathbb{E}_{\tau \sim \pi}\left[\sum_{t=0}^{\infty} \gamma^t \left(r(s_t, a_t) + \lambda \mathcal{H}(\pi(\cdot\mid s_t))\right)\right],
\end{equation}
where $\tau$ denotes a trajectory generated by policy $\pi$, $\mathcal{H}(\pi(\cdot\mid s_t)) = -\mathbb{E}_{a_t \sim \pi(\cdot\mid s_t)}[\log \pi(a_t\mid s_t)]$ represents the entropy of policy $\pi$ at state $s_t$, and $\lambda$ is a regularization parameter. The regularized self-consistency operator for a given policy $\pi$ is defined as $\left(\mathcal{T}_\lambda^\pi Q\right)(s, a) = r(s, a) + \gamma \mathbb{E}_{s^{\prime} \sim P(\cdot \mid s, a)} 
\left[\mathbb{E}_{a^{\prime} \sim \pi\left(\cdot \mid s^{\prime}\right)}\left[Q\left(s^{\prime}, a^{\prime}\right)\right] + \lambda \mathcal{H}\left(\pi\left(\cdot \mid s^{\prime}\right)\right)\right],$
where $Q:\mathcal{S}\times \mathcal{A} \to \R$ denotes the regularized state-action value function, also known as the soft Q-function. The regularized Bellman operator is defined as
\begin{equation}
\left[\mathcal{T}_{\lambda}(Q)\right](s, a)=\max _\pi\left\{\left[\mathcal{T}_{\lambda}^\pi(Q)\right](s, a)\right\}.
\end{equation}
The soft Q-function for policy $\pi$ is the fixed point $Q^\pi = \mathcal{T}_\lambda^\pi Q^\pi$, and the optimal soft Q-function satisfies $Q^* = \mathcal{T}_\lambda Q^*$, with corresponding optimal policy $\pi^*$. The soft policy iteration algorithm alternates between soft policy evaluation, which solves for $Q^\pi = \mathcal{T}_\lambda^\pi Q^\pi$ given $\pi$, and soft policy improvement, which updates the policy according to $\pi_{\text{new}} = \underset{\pi}{\operatorname{argmax}}\left\{\mathcal{T}_{\lambda}^\pi Q^\pi \right\}$.
The closed-form solution to this optimization problem is a Boltzmann distribution over actions: $\pi_{\text{new}}(a\mid s) \propto \exp\left(\frac{1}{\lambda} Q^\pi(s,a)\right).$

\subsection{Diffusion and Flow}

Consider a time-indexed family of random variables $\{X_t\}_{t\in[0,1]}$ taking values in $\mathbb{R}^d$, with associated densities $p_t$. In diffusion and flow, the objective is to learn a stochastic or deterministic evolution that transports samples from a source distribution $p_0$ to a target distribution $p_1$. Concretely, the marginal probability path $\{p_t\}_{t\in[0,1]}$ must satisfy the boundary conditions $p_{t=0}=p_0$ and $p_{t=1}=p_1$. The joint law of the endpoints $(X_0,X_1)$ is denoted by $\nu_{0,1}$, and is referred to as the coupling. The independent coupling $\nu_{0,1}(x_0,x_1)=p_0(x_0)p_1(x_1)$ is a common choice in practice.

The central idea in diffusion and flow models is to construct a conditional probability path $\{p_{t\mid Z}\}_{t\in[0,1]}$ that interpolates between $p_0$ and $p_1$ given a conditioning variable $Z$. Common choices of $Z$ include the two-sided $Z=(X_0,X_1)$ and the one-sided $Z=X_1$. They lead to the same results, though in different contexts one may be more convenient than the other for ease of presentation. We will use both throughout the paper: $Z$ denotes a generic placeholder, and we specify its form when needed. For brevity, we compress notation of conditioning in subscripts; for example, $p_{t\mid X_0, X_1}$ as $p_{t\mid 0,1}$.

In flow matching, the evolution of $X_t$ is defined via an ordinary differential equation (ODE)
\begin{equation}
\label{flow_ode}
\frac{dX_t}{dt} = v_t(X_t), \quad X_{t=0} \sim p_0,
\end{equation}
where $v_t(x): \R^d \to \R^d$ is referred to as the marginal velocity field. Given a conditioning variable $Z$, we denote the conditional velocity field by $v_{t\mid Z}(\cdot\mid Z)$. $v_t(x)$ is the conditional expectation of $v_{t\mid Z}$ given $X_t=x$:
\begin{equation}
\label{marginal_velocity}
\begin{aligned}
v_t(x) & =\mathbb{E}\left[v_{t \mid 0,1}\left(X_t \mid X_0, X_1\right) \mid X_t=x\right] \\
& =\mathbb{E}\left[v_{t \mid 1}\left(X_t \mid X_1\right) \mid X_t=x\right] .
\end{aligned}
\end{equation}

To learn a parameterized velocity field $v_t^\theta(x)$, the (conceptual) flow matching loss regresses toward the marginal velocity field $v_t(x)$:
\begin{equation}
\label{fm_loss}
\mathcal{L}_{\mathrm{FM}}(\theta)
= \mathbb{E}_{t\sim \mathcal{U}[0,1], X_t\sim p_t}
\left[\left\|v_t^\theta(X_t)-v_t(X_t)\right\|_2^2\right],
\end{equation}
which is generally intractable since $v_t(x)$ typically admits no closed-form expression.
The conditional flow matching loss instead regresses $v_t^\theta$ onto the conditional velocity field $v_{t\mid Z}$:
\begin{equation}
\label{cfm_loss}
\small{
\begin{aligned}
\mathcal{L}_{\mathrm{CFM}}(\theta)
&=
\E_{
    \substack{
        t\sim \mathcal{U}[0,1], \\ 
        (X_0,X_1)\sim \nu_{0,1}, \\ 
        X_t\sim p_{t\mid 0,1}
    }
}
\left[
\left\|v_t^\theta(X_t)-v_{t\mid 0,1}(X_t\mid X_0,X_1)\right\|_2^2
\right] \\
&=
\E_{
    \substack{
        t\sim \mathcal{U}[0,1],\\
        X_1\sim p_1,\, X_t\sim p_{t\mid 1}}
    }
\left[
\left\|v_t^\theta(X_t)-v_{t\mid 1}(X_t\mid X_1)\right\|_2^2
\right].
\end{aligned}
}
\end{equation}
We recall the following equivalence between the marginal and conditional objectives \cite{lipman2023flow}.
\begin{lemma}
\label{lem:grad_equiv}
Under mild regularity conditions, $\mathcal{L}_{\mathrm{FM}}$ and $\mathcal{L}_{\mathrm{CFM}}$ share the same set of global minimizers.
Moreover, their gradients with respect to $\theta$ coincide, i.e., $\nabla_\theta \mathcal{L}_{\mathrm{FM}}(\theta)=\nabla_\theta \mathcal{L}_{\mathrm{CFM}}(\theta)$ for all $\theta$.
\end{lemma}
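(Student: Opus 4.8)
The plan is to show that $\mathcal{L}_{\mathrm{FM}}(\theta)$ and $\mathcal{L}_{\mathrm{CFM}}(\theta)$ differ only by an additive constant independent of $\theta$, from which both assertions follow at once. Working with the one-sided conditioning $Z=X_1$ (the two-sided case $Z=(X_0,X_1)$ is identical), I would expand each squared norm inside the expectations as $\|v_t^\theta-v_t\|_2^2=\|v_t^\theta\|_2^2-2\langle v_t^\theta, v_t\rangle+\|v_t\|_2^2$ and $\|v_t^\theta-v_{t\mid 1}\|_2^2=\|v_t^\theta\|_2^2-2\langle v_t^\theta, v_{t\mid 1}\rangle+\|v_{t\mid 1}\|_2^2$, then compare the two objectives term by term.

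First I would note that the two sampling schemes induce the same marginal law for $X_t$: in $\mathcal{L}_{\mathrm{CFM}}$ one draws $X_1\sim p_1$ and then $X_t\sim p_{t\mid 1}$, so that $X_t\sim p_t$ by the defining property of the conditional probability path. Hence the pure quadratic term $\mathbb{E}_{X_t\sim p_t}[\|v_t^\theta(X_t)\|_2^2]$ is common to both losses. For the cross term I would invoke the tower property together with the defining identity \eqref{marginal_velocity}, namely $v_t(x)=\mathbb{E}[v_{t\mid 1}(X_t\mid X_1)\mid X_t=x]$. Since $v_t^\theta(X_t)$ is measurable with respect to $X_t$, conditioning on $X_t$ yields $\mathbb{E}_{X_t\sim p_t}\langle v_t^\theta(X_t), v_t(X_t)\rangle=\mathbb{E}\langle v_t^\theta(X_t), \mathbb{E}[v_{t\mid 1}\mid X_t]\rangle=\mathbb{E}\langle v_t^\theta(X_t), v_{t\mid 1}(X_t\mid X_1)\rangle$, where the outer expectation in the last term is over the joint law of $(X_1,X_t)$. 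Thus the cross terms of the two objectives agree as well.

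The only surviving terms are $\mathbb{E}[\|v_t(X_t)\|_2^2]$ and $\mathbb{E}[\|v_{t\mid 1}(X_t\mid X_1)\|_2^2]$, neither of which involves $\theta$. Consequently $\mathcal{L}_{\mathrm{FM}}(\theta)=\mathcal{L}_{\mathrm{CFM}}(\theta)+C$ with $C=\mathbb{E}[\|v_{t\mid 1}\|_2^2]-\mathbb{E}[\|v_t\|_2^2]$ independent of $\theta$ (and in fact $C\ge 0$ by conditional Jensen, being a conditional-variance-type quantity). Two objectives that agree up to an additive constant share the same set of global minimizers and have identical gradients, giving $\nabla_\theta \mathcal{L}_{\mathrm{FM}}(\theta)=\nabla_\theta \mathcal{L}_{\mathrm{CFM}}(\theta)$ for all $\theta$, which proves the lemma.

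The main obstacle is not the algebra but the measure-theoretic bookkeeping hidden in the phrase ``mild regularity conditions.'' I expect to require integrability of $v_{t\mid 1}$ so that the conditional expectation defining $v_t$ exists and the cross terms are finite, together with square-integrability of $v_t^\theta$ under $p_t$ so that Fubini licenses the interchange of expectation and inner product in the tower-property step. For the gradient claim I would additionally justify differentiating under the expectation sign, for instance via dominated convergence with a local integrable envelope for $\nabla_\theta\|v_t^\theta\|_2^2$ and $\nabla_\theta\langle v_t^\theta, v_t\rangle$; once that interchange is permitted, the $\theta$-independent constant $C$ differentiates to zero and the two gradients coincide pointwise in $\theta$.
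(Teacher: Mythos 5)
Your proof is correct and is essentially the standard argument from Lipman et al.\ that the paper simply cites for this lemma: expand the squares, use that both schemes induce the same marginal $p_t$ for $X_t$, and apply the tower property with \eqref{marginal_velocity} to match the cross terms, so the two losses differ by a $\theta$-independent constant. One small slip: with your orientation $\mathcal{L}_{\mathrm{FM}}=\mathcal{L}_{\mathrm{CFM}}+C$, the constant is $C=\mathbb{E}[\|v_t\|_2^2]-\mathbb{E}[\|v_{t\mid 1}\|_2^2]\le 0$ (conditional Jensen goes the other way), but this does not affect either conclusion.
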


A widely adopted choice to construct a conditional probability path is via the linear interpolation $X_t = \alpha_t X_1 + \beta_t X_0$,
where the schedule $(\alpha_t, \beta_t)$ satisfies the boundary conditions $\alpha_0=0$, $\alpha_1=1$, $\beta_0=1$, and $\beta_1=0$. Under this construction, for two-sided conditioning, $p_{t\mid 0,1}\left(x_t \mid x_0, x_1\right)=\delta\left(x_t-\left(\alpha_t x_1+\beta_t x_0\right)\right)$ is a Dirac measure concentrated on the interpolant, and the corresponding conditional velocity field is
$v_{t\mid 0,1}(X_t\mid X_0,X_1)=\dot{\alpha}_t X_1 + \dot{\beta}_t X_0$.
For one-sided conditioning, we have $v_{t\mid 1}(X_t\mid X_1)=\dot{\alpha}_t X_1+\dot{\beta}_t \frac{X_t-\alpha_t X_1}{\beta_t}$.

The linear interpolation $X_t = \alpha_t X_1 + \beta_t X_0$ encompasses many popular diffusion and flow models as special cases, up to particular choices of the schedule $(\alpha_t,\beta_t)$, time conventions, and prediction parameterization. Moreover, we have $v_t\left(X_t\right)=\dot{\alpha}_t \mathbb{E}\left[X_1 \mid X_t\right]+\dot{\beta}_t \mathbb{E}\left[X_0 \mid X_t\right]$,
and, taking conditional expectations in the interpolation itself, $X_t=\alpha_t \mathbb{E}\left[X_1 \mid X_t\right]+\beta_t \mathbb{E}\left[X_0 \mid X_t\right].$
Thus, the data-prediction and noise-prediction parameterizations can be recovered by solving these equations for $\mathbb{E}[X_1\mid X_t]$ or $\mathbb{E}[X_0\mid X_t]$ respectively.

\subsection{Control Variates}

Control variates are a classical technique for reducing the variance of Monte Carlo estimators. Suppose we need to estimate an expectation $\mu=\E_{X\sim q}[f(X)]$ for a measurable function $f:\R^d\to\R$, using samples from $q$. Let $h:\R^d\to\R$ be an auxiliary function whose mean $\E_q[h(X)]$ is known. Then, for any coefficient $\eta\in\R$,
\begin{equation}
\nonumber
\E_{X\sim q}[f(X)]
=
\E_{X\sim q}\!\left[f(X)+\eta\big(h(X)-\E_q[h(X)]\big)\right].
\end{equation}
Hence we can replace $f(X)$ by $f(X)+\eta\big(h(X)-\E_q[h(X)]\big)$ in Monte Carlo estimation, and choose $h$ and $\eta$ wisely to reduce variance.

\section{Reverse Flow Matching}

Training the velocity field $v_t^\theta$ with the conditional flow matching loss requires samples from the target distribution $p_1$. As indicated by \eqref{cfm_loss}, this does not require an explicit form for $p_1$, only the ability to draw samples from it. In some applications, however, the situation is reversed: $p_1$ is known (often only up to a normalizing constant), but an efficient sampler is unavailable. A canonical example is sampling from Boltzmann distributions, which appear widely across scientific domains, and, in our problem of interest, reinforcement learning (RL).

Suppose we parameterize the policy $\pi^{\theta} (a\mid s)$ using a diffusion or flow model. The soft policy improvement step requires updating the policy towards the Boltzmann distribution $\pi_{\text{new}}(a\mid s) \propto \exp\left(\frac{1}{\lambda} Q(s,a)\right)$. The core challenge is that we only have access to $Q$ and cannot efficiently generate samples from $\pi_{\text{new}}$ to train the policy via conditional flow matching loss. An alternative method is to drop the closed-form expression of $\pi_{\text{new}}$ and return to the underlying optimization problem 
$\pi_{\text{new}} = \underset{\pi}{\operatorname{argmax}}\left\{\mathcal{T}_{\lambda}^\pi Q^\pi \right\} = \underset{\pi}{\operatorname{argmax}}\left\{\mathbb{E}_{a \sim \pi(\cdot \mid s)}[Q(s, a)]+\lambda \mathcal{H}(\pi(\cdot \mid s))\right\}$.
Then the parameters of $\pi^\theta$ can be optimized directly via gradient ascent. However, for diffusion or flow policies this approach is fundamentally flawed: it requires backpropagating through the entire sampling procedure, which is computationally expensive and numerically unstable.

In this section, we derive a reverse flow matching loss that sidesteps the requirement for direct samples from $p_1$, relying instead on posterior mean estimation.
Furthermore, we introduce Langevin Stein operators to construct zero-mean control variates, yielding a general class of estimators that share the same expectation.
For simplicity, we specialize to the independent coupling $\nu_{0,1}(x_0,x_1)=p_0(x_0)p_1(x_1)$ and the linear interpolation $X_t=\alpha_t X_1 + \beta_t X_0$, but the derivations apply to more general settings. Due to space constraints, we defer proofs of all theoretical results to Appendix~\ref{app:proof}.

\subsection{From Forward Construction to Reverse Inference}

Standard conditional flow matching operates on a forward, constructive principle: we sample the source noise $X_0 \sim p_0$ and target data $X_1 \sim p_1$, then synthesize the intermediate state through the prescribed interpolation $X_t = \alpha_t X_1 + \beta_t X_0$. Training effectively becomes a supervised regression problem conditioned on these known endpoints. This workflow, however, relies entirely on the ability to efficiently sample pairs $(X_0, X_1)$.

In our setting, direct samples from $p_1$ are unavailable, and $p_1$ is only known up to a normalizing constant. This breaks the forward pipeline and necessitates a shift to a reverse, inferential viewpoint.
Instead of manufacturing $X_t$ from known components, we treat $X_t$ as observed evidence and $X_0$ as a latent variable that explains its origin. Since the interpolation is a rigid constraint, any hypothesized noise value $x_0$ together with the observation $x_t$ uniquely determines the implied target endpoint $x_1\left(x_0, x_t\right)=\frac{1}{\alpha_t} x_t-\frac{\beta_t}{\alpha_t} x_0$.
Hence a candidate noise sample $x_0$ is plausible if it is likely under the prior $p_0$ and implies a target $x_1(x_0,x_t)$ that is likely under the target distribution $p_1$.
This intuition is formalized by Bayes' theorem, which yields the posterior distribution of the noise $X_0$ given $X_t$:
\begin{equation}
q_{0\mid t}^*(x_0 \mid x_t) \propto p_0(x_0)\,p_1\!\left(\frac{1}{\alpha_t}x_t - \frac{\beta_t}{\alpha_t}x_0\right).
\end{equation}

Crucially, this logic applies symmetrically. The interpolation $X_t = \alpha_t X_1 + \beta_t X_0$ restricts the joint distribution of $(X_0,X_1)$ to a linear manifold passing through $X_t$. Consequently, inferring the noise $X_0$ is mathematically equivalent to inferring the data $X_1$. This leads to data posterior
\begin{equation}
q_{1\mid t}^*(x_1 \mid x_t) \propto p_1(x_1)\,p_0\!\left(\frac{1}{\beta_t}x_t - \frac{\alpha_t}{\beta_t}x_1\right).
\end{equation}
Both $q_{0\mid t}^*$ and $q_{1\mid t}^*$ represent marginal perspectives of the same underlying joint posterior coupling $q_{0,1\mid t}^*(x_0,x_1 \mid x_t) \propto p_0(x_0)\,p_1(x_1)\,\delta\left(x_t - (\alpha_t x_1 + \beta_t x_0)\right)$.

We can now introduce the reverse flow matching loss. In contrast to conditional flow matching, we replace the unavailable forward samples with ones drawn from the posterior distributions:
\begin{equation}
\label{rfm_loss}
\small
\begin{aligned}
&\mathcal{L}_{\mathrm{RFM}}(\theta) =  \E_{
    \substack{
        t\sim \mathcal{U}[0,1],\, X_t\sim \hat{p}_t, \\ 
        (X_0,X_1)\sim q_{0,1\mid t}^*
    }
}\left[\left\|v_t^\theta(X_t)-(\dot{\alpha}_t X_1 + \dot{\beta}_t X_0)\right\|_2^2\right]\\&
=\E_{
    \substack{
        t\sim \mathcal{U}[0,1],\, X_t\sim \hat{p}_t, \\ 
        X_0\sim q_{0\mid t}^*, \\ 
        X_1 \sim \delta\left(\frac{1}{\alpha_t} X_t-\frac{\beta_t}{\alpha_t} X_0\right)
    }
}\left[\left\|v_t^\theta(X_t)-(\dot{\alpha}_t X_1 + \dot{\beta}_t X_0)\right\|_2^2\right]\\&
=\E_{
    \substack{
        t\sim \mathcal{U}[0,1],\, X_t\sim \hat{p}_t, \\ 
        X_1\sim q_{1\mid t}^*, \\ 
        X_0 \sim \delta\left(\frac{1}{\beta_t} X_t-\frac{\alpha_t}{\beta_t} X_1\right)
    }
}\left[\left\|v_t^\theta(X_t)-(\dot{\alpha}_t X_1 + \dot{\beta}_t X_0)\right\|_2^2\right],
\end{aligned}
\end{equation}
where $\hat{p}_t$ is a proposal distribution we choose to sample $X_t$ from. Note that $\hat{p}_t$ is not the marginal density $p_t$ in the flow matching loss \eqref{fm_loss}, since we do not have access to it.
The first line of \eqref{rfm_loss} shows the general form, while the second and third lines represent implementations using noise-posterior and data-posterior sampling, respectively.

Furthermore, we can push the expectation over $X_0$ or $X_1$ inside the squared norm, yielding simpler objectives.
The noise-posterior form is defined as
\begin{equation}
\label{rfm_noise}
\small{
\begin{aligned}
    &\mathcal{L}_{\mathrm{RFM-N}}(\theta) =\mathbb{E}_{t\sim \mathcal{U}[0,1], X_t\sim \hat{p}_t}\\
    &\left[\left\|v_t^\theta(X_t)-\left(\frac{\dot{\alpha}_t}{\alpha_t} X_t+\frac{\alpha_t \dot{\beta}_t-\dot{\alpha}_t \beta_t}{\alpha_t} \E_{X_0 \sim q_{0\mid t}^*}\left[X_0\right]\right)\right\|_2^2\right],
\end{aligned}
}
\end{equation}
and the data-posterior form is defined as
\begin{equation}
\label{rfm_data}
\small{
\begin{aligned}
    &\mathcal{L}_{\mathrm{RFM-D}}(\theta) =\mathbb{E}_{t\sim \mathcal{U}[0,1], X_t\sim \hat{p}_t}\\
    &\left[\left\|v_t^\theta(X_t)-\left(\frac{\dot{\beta}_t}{\beta_t} X_t+\frac{\beta_t \dot{\alpha}_t-\dot{\beta}_t \alpha_t}{\beta_t} \E_{X_1 \sim q_{1\mid t}^*}\left[X_1\right]\right)\right\|_2^2\right].
\end{aligned}
}
\end{equation}

We state the equivalence of these objectives in the following proposition.
\begin{restatable}{proposition}{rfmequiv}
\label{prop:rfm_equiv}
The objectives $\mathcal{L}_{\mathrm{RFM}}(\theta)$, $\mathcal{L}_{\mathrm{RFM-N}}(\theta)$, and $\mathcal{L}_{\mathrm{RFM-D}}(\theta)$ differ only by additive constants independent of $\theta$. Therefore, they share the same set of global minimizers and have identical gradients with respect to $\theta$.
\end{restatable}

We next establish the equivalence between reverse flow matching and conditional flow matching.
\begin{restatable}{theorem}{rfmcfmsameminimizers}
\label{thm:rfm_cfm_same_minimizers}
Assume that for almost every $t\in[0,1]$, the true marginal distribution $p_t$ (used in conditional flow matching) and the proposal distribution $\hat{p}_t$ (used in reverse flow matching) are mutually absolutely continuous.
Assume that the parameterized function class $\{v_t^\theta:\theta\in\Theta\}$ is sufficiently rich such that the regression objectives attain their global minima.
Then the objectives $\mathcal{L}_{\mathrm{RFM\text{-}N}}$, $\mathcal{L}_{\mathrm{RFM\text{-}D}}$, and $\mathcal{L}_{\mathrm{CFM}}$ share the same set of global minimizers.
\end{restatable}

Note that although the global minimizers coincide, the resulting optimization dynamics may differ. In particular, the gradient of the reverse flow matching objectives depends on the choice of proposal $\hat{p}_t$. Consequently, $\nabla_\theta \mathcal{L}_{\mathrm{RFM}}(\theta)$ generally differs from $\nabla_\theta \mathcal{L}_{\mathrm{CFM}}(\theta)$, except in the special case where $\hat{p}_t$ matches the true marginal $p_t$ in \eqref{cfm_loss}.

\begin{remark}
The reverse flow matching framework flexibly accommodates various network parameterizations. While \eqref{rfm_noise} and \eqref{rfm_data} are formulated for velocity prediction, the training target can be readily adapted: for data prediction, one can regress directly onto $\mathbb{E}_{X_1 \sim q_{1\mid t}^*(\cdot \mid X_t)}[X_1]$, and for noise prediction, onto $\mathbb{E}_{X_0 \sim q_{0\mid t}^*(\cdot \mid X_t)}[X_0]$. For score prediction with a Gaussian source $p_0$, the target becomes $-\frac{1}{\beta_t}\mathbb{E}_{X_0 \sim q_{0\mid t}^*(\cdot \mid X_t)}[X_0]$. The framework naturally subsumes standard diffusion models through appropriate choices of the schedule $(\alpha_t, \beta_t)$. For instance, the variance-exploding (VE) schedule is recovered by setting $\alpha_t=1$ and $\beta_t=\sigma_{\min}\left(\frac{\sigma_{\max}}{\sigma_{\min}}\right)^{1-t}$ (forward-time convention). Furthermore, we show in Appendix~\ref{app:reverse_score_matching} that the proposed method can be applied to train score-based models even when the source distribution $p_0$ is non-Gaussian, for which we term reverse score matching.
\end{remark}

\subsection{Langevin Stein Operators}
\label{sec:langevin_stein}
The reverse flow matching approach provides a principled framework for training diffusion and flow models when only an unnormalized target density is available. From \eqref{rfm_noise} and \eqref{rfm_data}, the key computational challenge is estimating the posterior means $\E_{X_0 \sim q_{0\mid t}^*(\cdot \mid x_t)}[X_0]$ or $\E_{X_1 \sim q_{1\mid t}^*(\cdot \mid x_t)}[X_1]$ for a given $x_t$. We focus here on the noise posterior. The results apply symmetrically to the data posterior.

Recall that $q_{0\mid t}^*(x_0 \mid x_t) \propto p_0(x_0)\,p_1\!\left(\frac{1}{\alpha_t}x_t - \frac{\beta_t}{\alpha_t}x_0\right)$ is known only up to a normalizing constant. A standard approach to estimate $\E_{X_0 \sim q_{0\mid t}^*(\cdot \mid x_t)}[X_0]$ in this setting is self-normalized importance sampling (SNIS). Given $K$ samples $\{X_0^{(i)}\}_{i=1}^K$ drawn from a chosen proposal distribution $\bar{q}$, the SNIS estimator is
\begin{equation}
\label{snis_standard}
\hat{\mu}_{\mathrm{SNIS}}\left[X_0 \mid t,x_t\right]
=\frac{\sum_{i=1}^K w(X_0^{(i)}, x_t) X_0^{(i)}}{\sum_{i=1}^K w(X_0^{(i)}, x_t)},
\end{equation}
where the unnormalized importance weights are $w\left(x_0, x_t\right)=\frac{p_0\left(x_0\right) \tilde{p}_1\left(\frac{1}{\alpha_t} x_t-\frac{\beta_t}{\alpha_t} x_0\right)}{\bar{q}\left(x_0\right)}$. $\tilde{p}_1$ denotes the unnormalized part of $p_1$.
A simple choice is $\bar{q}=p_0$, in which case the weights simplify to $w\left(x_0, x_t\right)=\tilde{p}_1\left(\frac{1}{\alpha_t} x_t-\frac{\beta_t}{\alpha_t} x_0\right)$.

From a practical standpoint, the variance of the posterior mean estimator is crucial. Under a fixed computational budget (i.e., a fixed number of samples $K$), a lower-variance estimator leads to more reliable estimates and therefore more stable training. To this end, we introduce Langevin Stein operators \cite{oates2017control, gorham2017measuring} and leverage them to construct control variates that effectively reduce the variance of the posterior mean estimation.

\begin{definition}[Langevin Stein operator \cite{oates2017control, gorham2017measuring}]
Let $p$ be a continuously differentiable density on $\R^d$, and let $\phi:\R^d\to\R^d$ be a continuously differentiable vector field. The Langevin Stein operator $\mathcal{T}_p$, associated with $p$, acts on $\phi$ as
\begin{equation}
\nonumber
(\mathcal{T}_p \phi)(x) \;=\; \nabla \cdot \phi(x) \;+\; \phi(x) \cdot \nabla \log p(x),
\end{equation}
where $\nabla \cdot \phi(x) = \sum_{i=1}^d \frac{\partial \phi_i(x)}{\partial x_i}$ is the divergence.
In particular, $\mathcal{T}_p \phi:\R^d\to\R$ is scalar-valued.
\end{definition}

\begin{restatable}{lemma}{scalarlangevin}
\label{scalar_stein}
Let $p$ be a continuously differentiable density on $\R^d$, and let $\phi:\R^d\to\R^d$ be continuously differentiable. Assume $\mathbb E_{X\sim p}[|(\mathcal T_p\phi)(X)|]<\infty$ and that
\begin{equation}
\nonumber
\lim_{R\to\infty}\ \int_{\partial B_R} p(x)\,\phi(x)\cdot n(x)\, dS(x) \;=\; 0,
\end{equation}
where $B_R:=\{x:\|x\|_2\le R\}$ and $n(x)$ denotes the outward unit normal on $\partial B_R$.
Then
\begin{equation}
\nonumber
\mathbb{E}_{X\sim p}\!\left[(\mathcal{T}_p\phi)(X)\right] \;=\; 0.
\end{equation}
\end{restatable}

Lemma~\ref{scalar_stein} provides a general recipe for constructing scalar zero-mean control variates under a suitably regular density $p$. In our setting, however, the posterior mean is vector-valued in $\R^d$. We therefore extend the Langevin Stein construction to produce vector-valued zero-mean control variates via matrix-valued test functions.

\begin{definition}[generalized Langevin Stein operator]
Let $p$ be a continuously differentiable density on $\R^d$. Let $\Phi:\R^d\to\R^{d\times m}$ be continuously differentiable, and write $\Phi=[\phi_1,\cdots,\phi_m]$ where each column $\phi_j:\R^d\to\R^d$ is a vector field. Define the generalized Langevin Stein operator $\mathcal{T}_{p,m}$ by
\begin{equation}
\nonumber
\begin{aligned}
(\mathcal{T}_{p,m}\Phi)(x)&
:=
\begin{pmatrix}
(\mathcal{T}_p\phi_1)(x)\\
\vdots\\
(\mathcal{T}_p\phi_m)(x)
\end{pmatrix}\\&
\;=\;
\nabla\cdot\Phi(x) \;+\; \Phi(x)^\top \nabla\log p(x)
\end{aligned}
\end{equation}
where $\nabla\cdot\Phi(x)\in\R^m$ denotes the column-wise divergence with components
\begin{equation}
\nonumber
(\nabla\cdot\Phi(x))_j = \sum_{i=1}^d \frac{\partial \Phi_{ij}(x)}{\partial x_i},
\end{equation}
for $j=1,\cdots,m$. In particular, $\mathcal{T}_{p,m}\Phi:\R^d\to\R^m$ is vector-valued.
\end{definition}

\begin{restatable}{proposition}{vectorlangevin}
\label{vec_stein}
Assume that each column $\phi_j$ of $\Phi$ satisfies the integrability and boundary conditions of Lemma~\ref{scalar_stein}. Then
\begin{equation}
\nonumber
\mathbb{E}_{X\sim p}\!\left[(\mathcal{T}_{p,m}\Phi)(X)\right] \;=\; 0 \in \R^m.
\end{equation}
\end{restatable}

According to Proposition~\ref{vec_stein}, setting $m=d$ allows us to use any (suitably regular) matrix-valued test function $\Phi_t(\cdot,x_t):\R^d\to\R^{d\times d}$ to construct a vector-valued zero-mean control variate under the posterior density $q_{0\mid t}^*(\cdot\mid x_t)$. We denote the test function as $\Phi_t(\cdot,x_t)$ to highlight its dependence on $t$ and $x_t$, which act as fixed parameters. Define
\begin{equation}
\nonumber
g_{\Phi_t}(x_0,x_t)
:= (\mathcal{T}_{q_{0\mid t}^*(\cdot\mid x_t),d}\Phi_t(\cdot,x_t))(x_0)
\in \R^d,
\end{equation}
It follows that
\begin{equation}
\nonumber
\mathbb{E}_{X_0\sim q_{0\mid t}^*(\cdot\mid x_t)}\!\left[g_{\Phi_t}(X_0,x_t)\right] = 0 \in \R^d.
\end{equation}
Consequently, $g_{\Phi_t}(X_0,x_t)$ is a vector-valued zero-mean control variate under $q_{0\mid t}^*(\cdot\mid x_t)$, and adding $g_{\Phi_t}$ preserves the posterior mean:
\begin{equation}
\nonumber
\mathbb{E}_{X_0\sim q_{0\mid t}^*(\cdot\mid x_t)}[X_0]
=
\mathbb{E}_{X_0\sim q_{0\mid t}^*(\cdot\mid x_t)}\!\left[X_0 + g_{\Phi_t}(X_0,x_t)\right].
\end{equation}

The remaining question is how to choose $\Phi_t(\cdot, x_t)$. Theoretically, under suitable regularity assumptions on $q_{0\mid t}^*(\cdot\mid x_t)$ and the admissible class of test functions, there may exist an optimal choice $\Phi_t^\star$ that eliminates the variance entirely. This zero-variance condition is characterized by the following proposition.

\begin{restatable}{proposition}{poissonoptimal}
\label{prop:poisson_optimal}
Let $\mu_{0\mid t}(x_t)=\mathbb{E}_{X_0\sim q_{0\mid t}^*(\cdot\mid x_t)}[X_0]$. Assume $\Phi_t(\cdot,x_t)$ is admissible so that Proposition~\ref{vec_stein} applies. Then the estimator
\begin{equation}
\nonumber
X_0 + (\mathcal{T}_{q_{0\mid t}^*(\cdot\mid x_t), d}\Phi_t(\cdot,x_t))(X_0)
\end{equation}
has zero variance under $q_{0\mid t}^*(\cdot\mid x_t)$ if and only if
\begin{equation}
\label{eq:zero_var_condition}
\left(\mathcal{T}_{q_{0\mid t}^*(\cdot\mid x_t), d}\Phi_t(\cdot,x_t)\right)(x_0)=\mu_{0\mid t}(x_t)-x_0
\end{equation}
holds on the support of $q_{0\mid t}^*(\cdot\mid x_t)$.
\end{restatable}

Solving the functional equation \eqref{eq:zero_var_condition} is generally intractable. Nevertheless, it motivates minimizing the estimator variance over a parametric family $\{\Phi_\psi:\psi\in\Psi\}$, where $\Psi$ is an appropriate parameter space.

As a concrete example, consider the class of diagonal test functions
\begin{equation}
\nonumber
\Phi_t(x_0,x_t)=\diag\{h_{t,1}(x_0,x_t),\cdots,h_{t,d}(x_0,x_t)\},
\end{equation}
where each $h_{t,j}(\cdot,x_t):\R^d\to\R$ is a scalar function for $j=1,\cdots,d$. In this case, the induced control variate takes the coordinate-wise form
\begin{equation}
\label{eq:diag_cv_general}
\begin{aligned}
&g_{\Phi_t}(x_0,x_t) =\\
&\begin{pmatrix}
\partial_{x_{0,1}} h_{t,1}(x_0,x_t) + h_{t,1}(x_0,x_t)\,\partial_{x_{0,1}} \log q_{0\mid t}^*(x_0\mid x_t)\\
\partial_{x_{0,2}} h_{t,2}(x_0,x_t) + h_{t,2}(x_0,x_t)\,\partial_{x_{0,2}} \log q_{0\mid t}^*(x_0\mid x_t)\\
\vdots\\
\partial_{x_{0,d}} h_{t,d}(x_0,x_t) + h_{t,d}(x_0,x_t)\,\partial_{x_{0,d}} \log q_{0\mid t}^*(x_0\mid x_t)
\end{pmatrix},
\end{aligned}
\end{equation}
where $\partial_{x_{0,j}}$ denotes the partial derivative with respect to the $j$-th coordinate of $x_0$.

Denote the posterior score function with respect to $x_0$ by
\begin{equation}
\nonumber
s^*_{0\mid t}(x_0,x_t) := \nabla_{x_0}\log q_{0\mid t}^*(x_0\mid x_t) \in \R^d.
\end{equation}
If we further restrict $h_{t,j}(x_0,x_t)\equiv \Lambda_j$ (i.e., constant function) for each $j=1,\cdots,d$, then $\partial_{x_{0,j}}h_{t,j}(x_0,x_t)=0$ and \eqref{eq:diag_cv_general} simplifies to
\begin{equation}
\label{eq:diag_cv_const}
g_{\Phi_t}(x_0,x_t)
=
\diag(\Lambda)\,s^*_{0\mid t}(x_0,x_t),
\end{equation}
where $\Lambda=(\Lambda_1,\cdots,\Lambda_d)^\top\in\R^d$.
By construction, $\E_{X_0\sim q_{0\mid t}^*(\cdot\mid x_t)}[g_{\Phi_t}(X_0,x_t)]=0$. Therefore,
\begin{equation}
\nonumber
\E_{X_0\sim q_{0\mid t}^*}[X_0]
=
\E_{X_0\sim q_{0\mid t}^*}\!\left[X_0 + \diag(\Lambda)\,s^*_{0\mid t}(X_0,x_t)\right].
\end{equation}
Accordingly, given samples $\{X_0^{(i)}\}_{i=1}^K$ from a proposal distribution $\bar q$, with corresponding unnormalized importance weights $w(X_0^{(i)}, x_t)$, we define the SNIS estimator with the control variate
\begin{equation}
\label{eq:snis_cv}
\begin{aligned}
    &\hat{\mu}_{\mathrm{SNIS\text{-}CV}}\left[X_0 \mid t,x_t; \Lambda\right]
= \\
&\frac{\sum_{i=1}^K w(X_0^{(i)}, x_t) \left(X_0^{(i)} + \diag(\Lambda)\,s^*_{0\mid t}(X_0^{(i)},x_t)\right)}{\sum_{i=1}^K w(X_0^{(i)}, x_t)}.
\end{aligned}
\end{equation}

To minimize the variance of SNIS estimator \eqref{eq:snis_cv}, we have the following result.

\begin{restatable}{proposition}{optgamma}
\label{prop:opt_gamma}
Fix $t$ and $x_t$. Let $X_0\sim \bar q$ and let $w(X_0,x_t)$ denote the corresponding unnormalized importance weight. Recall the notations $s^*_{0\mid t,j}(x_0,x_t) := \partial_{x_{0,j}}\log q_{0\mid t}^*(x_0\mid x_t)$
and
$\mu_{0 \mid t}\left(x_t\right)=\mathbb{E}_{X_0 \sim q_{0 \mid t}^*\left(\cdot \mid x_t\right)}\left[X_0\right]$.
Assume $\E_{\bar q}[w(X_0)]<\infty$ and $\E_{\bar q}[w(X_0)^2\|f_\Lambda(X_0,x_t)\|_2^2]<\infty$. Among all constant diagonal choices $h_{t,j}\equiv \Lambda_j$, the coefficients that minimize the asymptotic variance of estimator \eqref{eq:snis_cv} are given component-wise by
\begin{equation}
\label{eq:gamma_star}
\Lambda_j^*
=
-\frac{\E_{\bar q}\!\left[w(X_0,x_t)^2\big(X_{0,j}-\mu_{0\mid t,j}(x_t)\big)s^*_{0\mid t,j}(X_0,x_t)\right]}
{\E_{\bar q}\!\left[w(X_0,x_t)^2 \big(s^*_{0\mid t,j}(X_0,x_t)\big)^2\right]},
\end{equation}
for $j=1,\cdots,d$.
\end{restatable}

A special case of interest is the isotropic restriction $\Lambda_1=\cdots=\Lambda_d=\eta$ for some scalar $\eta\in\R$. We have the following result.
\begin{restatable}{proposition}{opteta}
\label{prop:opt_eta}
Take the same assumptions as in Proposition~\ref{prop:opt_gamma}. If further apply the isotropic restriction $\Lambda_1=\cdots=\Lambda_d=\eta$ (equivalently, $\Phi_t=\eta I_d$), then the coefficient that minimizes the asymptotic variance of the corresponding SNIS estimator is
\begin{equation}
\label{eq:eta_star}
\eta^*
=
-\frac{\E_{\bar q}\!\left[w(X_0,x_t)^2 \big(X_0-\mu_{0\mid t}(x_t)\big)^\top s^*_{0\mid t}(X_0,x_t)\right]}
{\E_{\bar q}\!\left[w(X_0,x_t)^2 \|s^*_{0\mid t}(X_0,x_t)\|_2^2\right]}.
\end{equation}
\end{restatable}

\begin{remark}
\label{remark_optimization}
Propositions~\ref{prop:opt_gamma} and \ref{prop:opt_eta} exemplify the strategy of parameterizing the matrix test function $\Phi_t$ and minimizing the variance over the resulting parameter space. More expressive parametrizations are also possible. For example, we may let the entries of $\Phi_t$ depend on $x_0$ through polynomials, or use richer feature expansions. We can also parameterize $\Phi_t(x_0,x_t)$ with a neural network and train it from samples to amortize variance reduction across $(t,x_t)$, though special care is needed to enforce the regularity conditions.
\end{remark}

\begin{remark}
We focus on the SNIS approach for estimating the posterior means $\E_{X_0\sim q_{0\mid t}^*(\cdot\mid x_t)}[X_0]$ and $\E_{X_1\sim q_{1\mid t}^*(\cdot\mid x_t)}[X_1]$, due to its simplicity and seamless integration into the RL training loop. However, the scope of our contribution extends beyond this specific estimator. The proposed control variates from Langevin Stein operators are broadly applicable to advanced estimation methods, such as Markov chain Monte Carlo (MCMC) and sequential Monte Carlo (SMC).
\end{remark}

\begin{remark}
While this paper centers on training diffusion and flow models via the reverse flow matching framework, the posterior means $\E_{X_0\sim q_{0\mid t}^*(\cdot\mid x_t)}[X_0]$ and $\E_{X_1\sim q_{1\mid t}^*(\cdot\mid x_t)}[X_1]$ used as supervision signals can also be directly employed for training-free sampling. Given $t$ and $x_t$, we can estimate the posterior mean to obtain the corresponding velocity (or score), enabling integration of the ODE (or SDE) over $t\in[0,1]$. Moreover, the construction of control variates via Langevin Stein operators can be applied in this setting to reduce estimation variance, thereby improving the quality of generated samples. Existing works that may benefit from our control variate constructions include \cite{pan2024model, huang2024reverse, grenioux2024stochastic}.
\end{remark}

\subsection{Application to Boltzmann Distributions}

Suppose the data distribution is of the Boltzmann form $p_1(x_1)\propto \exp\!\left(\frac{1}{\lambda}Q(x_1)\right)$.
Applying the control variate constructions to this specific $p_1$, we have the following result.

\begin{restatable}{theorem}{boltzmanncvmean}
\label{thm:boltzmann_cv_mean}
Assume the target density has the Boltzmann form $p_1(x_1)\propto \exp\!\left(\frac{1}{\lambda}Q(x_1)\right)$. Let $\Phi_t(x_0,x_t)=\diag\{h_{t,1}(x_0,x_t),\ldots,h_{t,d}(x_0,x_t)\}$ be a diagonal test function with constant entries
$h_{t,j}(x_0,x_t)\equiv \Lambda_j$, and write $\Lambda=(\Lambda_1,\ldots,\Lambda_d)^\top$.
Then the induced control variate satisfies
\begin{equation}
\label{eq:boltzmann_cv}
\begin{aligned}
&g_{\Phi_t}(x_0,x_t)
=\diag(\Lambda)\,\nabla_{x_0}\log p_0(x_0)\\
&-\frac{1}{\lambda}\frac{\beta_t}{\alpha_t}\diag(\Lambda)
\left[\nabla_{x_1}Q(x_1)\right]_{x_1=\frac{1}{\alpha_t}x_t-\frac{\beta_t}{\alpha_t}x_0},
\end{aligned}
\end{equation}
and the posterior mean estimator can be expressed as
\begin{equation}
\label{eq:boltzmann_mean_rep}
\begin{aligned}
   \mu_{0\mid t}(x_t)&=\E_{X_0\sim q_{0\mid t}^*(\cdot\mid x_t)}\!\Bigg[
    X_0+\diag(\Lambda)\,\nabla_{x_0}\log p_0(X_0)\\
    &-\frac{1}{\lambda}\frac{\beta_t}{\alpha_t}\,\diag(\Lambda)
    \left[\nabla_{x_1}Q(x_1)\right]_{x_1=\frac{1}{\alpha_t}x_t-\frac{\beta_t}{\alpha_t}X_0} \Bigg].
\end{aligned}
\end{equation}
Moreover, if $p_0=\mathcal{N}(0,I_d)$, we additionally have the identity
\begin{equation}
\label{eq:two_terms_equal}
\begin{aligned}
    &\E_{X_0\sim q_{0\mid t}^*(\cdot\mid x_t)}[X_0] =
\E_{X_0\sim q_{0\mid t}^*(\cdot\mid x_t)}\\
&\left[
-\frac{1}{\lambda}\frac{\beta_t}{\alpha_t}
\left[\nabla_{x_1}Q(x_1)\right]_{x_1=\frac{1}{\alpha_t}x_t-\frac{\beta_t}{\alpha_t}X_0}
\right].
\end{aligned}
\end{equation}
If we further impose isotropic coefficients $\Lambda_j\equiv \eta$ for $j=1,\ldots,d$ (equivalently, $\Phi_t=\eta I_d$), then the posterior mean simplifies to a linear combination:
\begin{equation}
\label{eq:iso_mean_rep}
\begin{aligned}
    &\mu_{0\mid t}(x_t) =
    (1-\eta)\,\E_{X_0\sim q_{0\mid t}^*(\cdot\mid x_t)}[X_0]\\
    &+\eta\,\E_{X_0\sim q_{0\mid t}^*(\cdot\mid x_t)}\!\left[
    -\frac{1}{\lambda}\frac{\beta_t}{\alpha_t}
    \left[\nabla_{x_1}Q(x_1)\right]_{x_1=\frac{1}{\alpha_t}x_t-\frac{\beta_t}{\alpha_t}X_0}
    \right].
\end{aligned}
\end{equation}
\end{restatable}

Theorem~\ref{thm:boltzmann_cv_mean} characterizes a general family of posterior mean estimators for Boltzmann target distributions, whose variance can be controlled through the choice of $\Lambda$ and $\eta$ in \eqref{eq:boltzmann_mean_rep} and \eqref{eq:iso_mean_rep}. As shown in Propositions~\ref{prop:opt_gamma} and \ref{prop:opt_eta}, these coefficients admit variance-minimizing choices, yielding SNIS estimators with reduced variance and, consequently, more stable and accurate supervision signals for training diffusion and flow models. Importantly, our framework also provides a unifying view of existing approaches for targeting Boltzmann distributions in online RL \cite{akhound2024iterated,jain2025sampling,ma2025efficient,dong2025maximum}. In particular, the noise-expectation and gradient-expectation families arise as two special cases of our general formulation by setting $\eta=0$ and $\eta=1$, respectively. A detailed mapping from these prior methods to our formulation is provided in Appendix~\ref{app:prior_methods}.

\subsection{Online Reinforcement Learning with Flow Policies}

We now instantiate reverse flow matching for flow policies in online RL. The extension to diffusion policies is straightforward.

We follow the actor-critic framework. We adopt the double Q-network design with two critics $Q^{\omega_1}(s,a)$ and $Q^{\omega_2}(s,a)$.
Given a set $\mathcal{D}$ of collected transitions, the critic losses are given by $\mathcal{L}_{Q}(\omega_i)
=
\mathbb{E}_{(s,a,r,s')\sim\mathcal{D}}\!\left[
\Big(
Q^{\omega_i}(s,a)
-
\hat{Q}
\Big)^2
\right],$
for $i\in\{1,2\}$, where $\bar{\omega}_i$ are the parameters of the target networks, and $\hat{Q}
=r+\gamma\,\min \left\{Q^{\bar{\omega}_1}\left(s^{\prime}, a^{\prime}\right), Q^{\bar{\omega}_2}\left(s^{\prime}, a^{\prime}\right)\right\}$
with $a'\sim\pi^\theta(\cdot\mid s')$. For brevity, from now on, we denote $Q(s,a)=\min\{Q^{\omega_1}(s,a),\,Q^{\omega_2}(s,a)\}$.

\paragraph{Handling action bounds.}

In continuous-control tasks, actions are typically bounded (e.g., $a\in[-1,1]^d$).
Prior methods address the action bounds mainly by heuristics, e.g., using a truncated Gaussian for sampling noise \cite{dong2025maximum, ma2025efficient, jain2025sampling}. However, the truncation operations can break the probability path and lead to suboptimal behaviors. Instead, in this paper, we handle the action bounds in a principled way. We learn the flow in an unconstrained latent space $u\in\mathbb{R}^d$ and map latents to actions via $a = \tanh(u)$.

The soft policy improvement target in action space is the Boltzmann distribution $\pi_{\text{new}}(a\mid s)\ \propto\ \exp\!\left(\frac{1}{\lambda}Q(s,a)\right)$.
Under the change of variables $a=\tanh(u_1)$, the corresponding unnormalized target density in latent space becomes
\begin{equation}
\nonumber
\small
\begin{aligned}
    \tilde{\pi}_{\text{new}}(u_1\mid s)
\ &\propto\
\exp\!\left(\frac{1}{\lambda}Q\!\left(s,\tanh(u_1)\right)\right)\,
\left|\det \frac{d\tanh(u_1)}{du_1}\right|
\ \\& =\
\exp\!\left(\frac{1}{\lambda}Q\!\left(s,\tanh(u_1)\right)\right)\,
\prod_{j=1}^d \operatorname{sech}^2(u_{1,j}),
\end{aligned}
\label{eq:pretanh_boltzmann}
\end{equation}
where $\operatorname{sech}$ denotes the hyperbolic secant function, and $u_{1,j}$ denotes the $j$-th component of $u_1$. This Jacobian factor is essential for enforcing the correct Boltzmann distribution in action space.

For the flow policy, we parameterize the latent-space velocity field as $v_t^{\theta}(u_t, s)$. To sample an action, we draw $u_0\sim p_0$ and integrate the ODE $\frac{d u_t}{dt} = v_t^\theta(u_t, s)$ in $t\in [0,1]$. The action is $a= \tanh (u_1)$. We also denote this process by $a\sim \pi^{\theta}(\cdot \mid s)$.

Given $s$, $t$, and $u_t$, we estimate the noise posterior mean $\mu_{0\mid t}(u_t,s)$ with \eqref{eq:snis_cv} and test function $\Phi_t=\diag(\Lambda)$, where $\Lambda\in\mathbb{R}^d$. By definition $q_{0 \mid t}^*\left(u_0 \mid u_t, s\right) \propto p_0\left(u_0\right) \tilde{\pi}_{\text{new}}(u_1\mid s)$. We have
\begin{equation}
\nonumber
\small
\begin{aligned}
\mu_{0 \mid t}\left(u_t, s\right)=\mathbb{E}_{u_0 \sim q_{0 \mid t}^*\left(\cdot \mid u_t, s\right)}\left[u_0+\operatorname{diag}(\Lambda) s_{0 \mid t}^*\left(u_0, u_t, s\right)\right].
\end{aligned}
\end{equation}
Applying SNIS, we obtain
\begin{equation}
\nonumber
\small
\begin{aligned}
\hat{\mu}\!\left[u_0 \mid t,u_t,s;\Lambda\right] = \sum_{i=1}^K w^{(i)} \left(u_0^{(i)}+\operatorname{diag}(\Lambda) s_{0 \mid t}^*\left(u_0^{(i)}, u_t, s\right)\right),
\end{aligned}
\end{equation}
where $w^{(i)}$ denotes the normalized importance weight. Finally, we set $\bar{u}_0 = \hat{\mu}\![u_0 \mid t,u_t,s;\hat\Lambda]$ and $\bar u_1=\frac{u_t-\beta_t \bar{u}_0}{\alpha_t}$. The RFM velocity target is given by
\begin{equation}
\nonumber
\hat v_t(u_t,s)=\dot \alpha_t \bar u_1 + \dot \beta_t \bar u_0.
\end{equation}
The actor loss is then
\begin{equation}
\nonumber
\mathcal{L}_{\pi}(\theta)
=
\mathbb{E}_{t,u_t,s}\left[
\left\|
v_t^\theta(u_t,s)-\hat v_t(u_t,s)
\right\|_2^2
\right].
\label{eq:actor_loss_pretanh}
\end{equation}

We adopt a policy-induced proposal to sample $u_t$. Given state $s$, we sample $u_1$ by integrating the ODE in $t\in[0,1]$ with $u_0\sim \mathcal{N}(0,I_d)$. Then $u_t=\alpha_t u_1 + \beta_t \epsilon$, where $\epsilon\sim \mathcal{N}(0,I_d)$. Additionally, following prior works \cite{ding2024diffusion, dong2025maximum}, we generate $M$ action candidates during sampling, then select the one with the highest Q value.
Additional details on the SNIS computation, along with pseudocode for our algorithm, are provided in Appendix~\ref{app:alg_details}.

\section{Experiments}
\subsection{Toy Example}
\label{sec:toy_twomoons}

\vspace{-3mm}
\begin{figure}[h!]
    \centering
    \includegraphics[width=1.0\columnwidth]{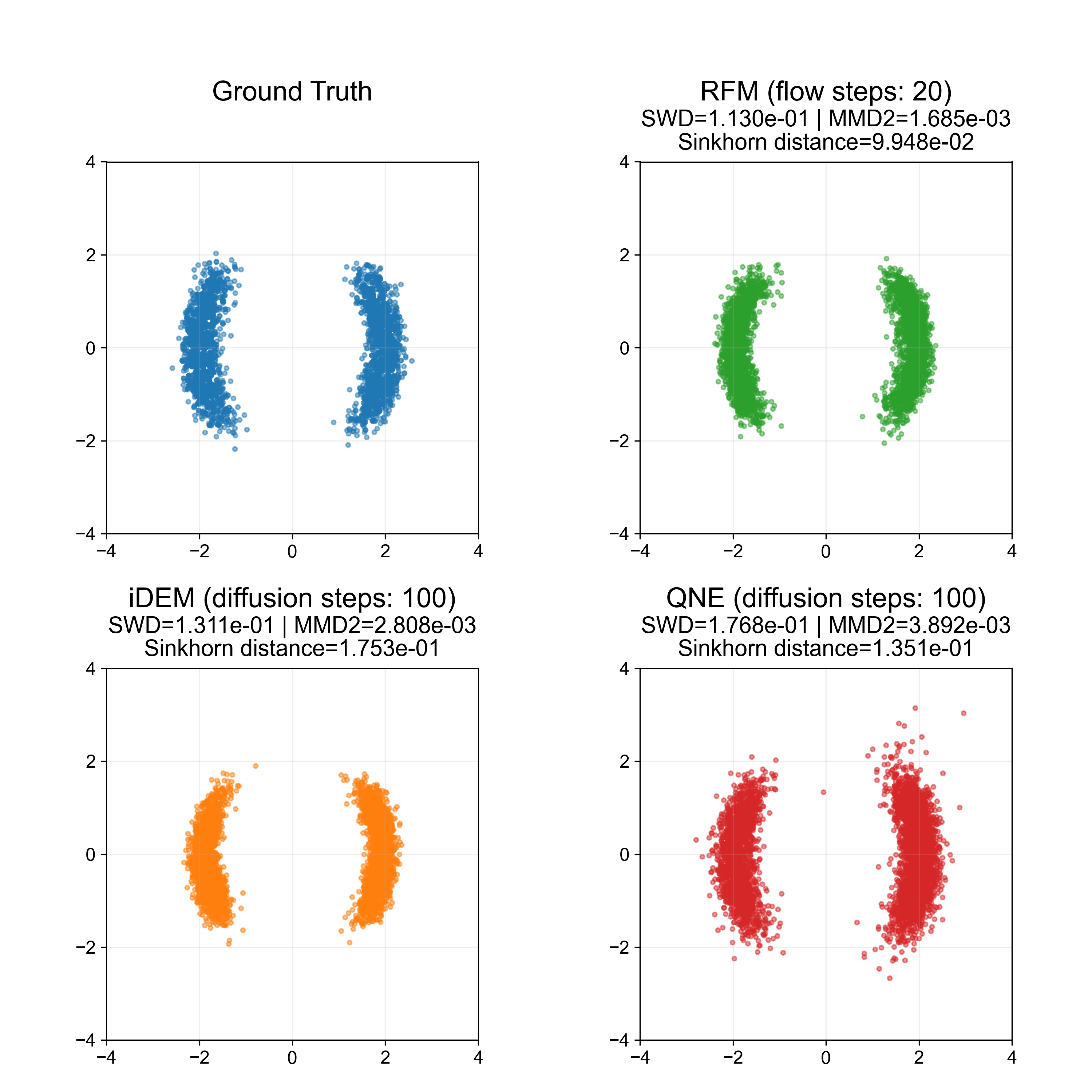}
    \caption{Two-moon target (top-left) and samples from three trained samplers (other panels).
    RFM (top-right) uses 20 steps to generate target samples, while the diffusion baselines iDEM (bottom-left) and QNE (bottom-right) use 100 steps. All methods are trained with the same posterior-estimation budget. Each algorithm panel reports sample-quality metrics against the ground-truth reference: sliced Wasserstein distance (SWD), squared maximum mean discrepancy (MMD$^2$), and Sinkhorn distance. RFM maintains the minimum discrepancy across all three metrics while requiring only one-fifth of the inference steps.}
    \label{fig:two_moon}
\end{figure}

\begin{figure*}[h!]
  \centering
  \includegraphics[width=6.5in]{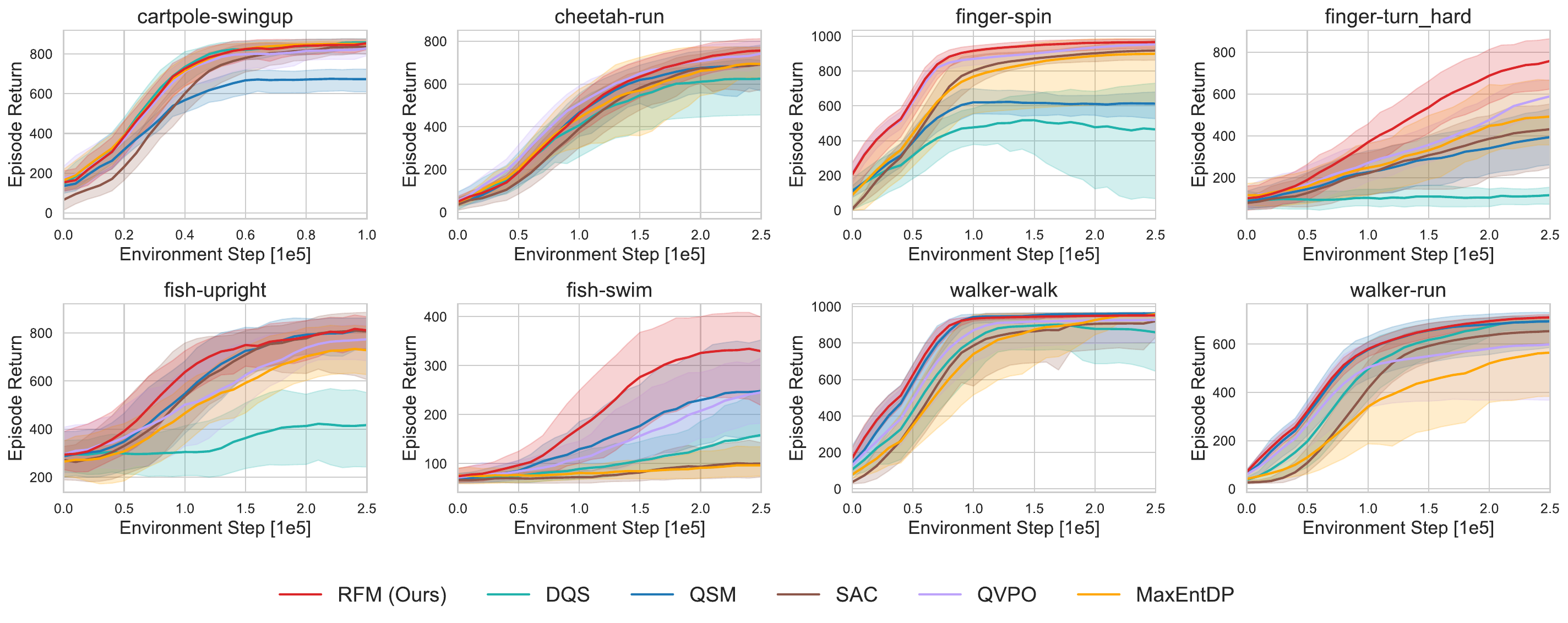}
  \caption{Training curves on eight environments. The solid lines correspond to the mean across five seeds, and the shaded regions represent minimum-to-maximum range over seeds. Our method (RFM) is the only algorithm that performs consistently well across all eight environments, exhibiting substantially better stability than the baselines.}
  \label{results_fig}
\end{figure*}

We first validate our RFM algorithm on a 2D two-moon target distribution, $p_1(x)\propto \exp(-E(x)/\lambda)$, where the density is known only up to a normalizing constant. During training, we only query the energy function $E(x)$ (and its gradient $\nabla E(x)$ when required), not target samples. We compare RFM's sample quality and inference-time efficiency against diffusion-based methods including iDEM \cite{akhound2024iterated} (gradient-expectation) and QNE \cite{dong2025maximum} (noise-expectation) under the same posterior-mean Monte Carlo sampling number, hence the same training computation. At inference time, RFM uses 20-step ODE integration, while the diffusion baselines use 100 denoising steps.

In \cref{fig:two_moon}, RFM achieves the lowest distribution discrepancy across metrics while requiring only one-fifth of the inference steps compared to both iDEM and QNE. Experiment details are provided in Appendix~\ref{app:exp_details}.

\subsection{RL Tasks}

We evaluate the proposed algorithm on eight environments from the DeepMind Control Suite \cite{tassa2018deepmind}, a widely used continuous-control benchmark. We compare with the following baselines: (1) soft actor-critic (SAC) \cite{haarnoja2018soft}, a standard maximum entropy RL method with Gaussian policies; (2) Q-score matching (QSM) \cite{psenka2024learning}, which addresses Boltzmann sampling by training a score model to match the Q-function gradient and then sampling via Langevin dynamics; (3) MaxEntDP \cite{dong2025maximum}, whose core mechanism is Q-weighted noise estimation (QNE), a representative of the noise-expectation family for training diffusion policies to sample from Boltzmann distributions; (4) diffusion Q-sampling (DQS) \cite{jain2025sampling}, a representative of the gradient-expectation family for training diffusion policies to sample from Boltzmann distributions; and (5) Q-weighted variational policy optimization (QVPO) \cite{ding2024diffusion}, which adopts iterative weighted regression for policy improvement.
Experiment details are reported in Appendix~\ref{app:exp_details}.

The results are summarized in Figure~\ref{results_fig}. Our method (RFM) is the only algorithm that performs consistently well across all eight environments, and it exhibits substantially better stability than the baselines. In contrast, each baseline struggles severely on some of the tasks. Notably, RFM uses only 10 flow steps, whereas all other diffusion policy baselines use 20 diffusion steps. These results demonstrate the effectiveness of the reverse flow matching framework. By enabling the flow policy to learn a Boltzmann distribution, the advantages of flow models translate into improved performance, as reflected in both better total rewards and fewer inference steps. We report ablation studies, sensitivity analyses, and extended baseline comparisons in Appendix~\ref{app:additional}.

\section{Conclusion}
In this paper, we proposed a unified framework for training diffusion and flow policies to sample from Boltzmann distributions in online RL. By adopting a reverse inferential perspective, we introduced reverse flow matching, which yields a tractable objective and turns the challenge of unavailable target samples into a posterior mean estimation problem. Moreover, we developed a general class of posterior mean estimators by leveraging Langevin Stein operators to construct control variates, reducing estimation variance and improving training stability. We also showed that existing methods for training diffusion policies to sample from Boltzmann distributions arise as special cases of our formulation. Empirically, our approach demonstrated stronger performance and greater stability across various continuous-control tasks compared to state-of-the-art baselines.

\FloatBarrier
\section*{Impact Statement}

This paper introduces reverse flow matching (RFM), a unified framework for training diffusion and flow models to target unnormalized distributions without access to direct samples. Its goal is to advance the field of generative models and reinforcement learning (RL). RFM enables stable and efficient learning of expressive policies for continuous-control tasks, thereby supporting the deployment of RL methods in increasingly complex real-world applications.

\bibliography{ref}
\bibliographystyle{icml2026}

\newpage
\appendix
\onecolumn

\section{Related Works}
\label{app:related_works}
In this section, we discuss prior works on leveraging diffusion and flow policies in online RL. We note that our review focuses on off-policy RL algorithms with diffusion and flow policies, primarily due to their sample efficiency and strong empirical performance. On-policy algorithms are also an active area of research, such as \cite{ding2026genpo, zhang2025gorl}. We categorize existing methods into four groups.

\textit{Optimization via Differentiable Sampling.} This approach treats the sampling process of diffusion or flow models as a differentiable computational graph and directly optimizes the policy parameters by backpropagating the gradient of the Q-function through the chain. \citet{wang2024diffusion} employ the reparameterization trick to differentiate through the diffusion sampling steps, enabling end-to-end policy optimization. \citet{lv2025flowbased} backpropagate gradients through the deterministic flow sampling procedure to update the velocity field parameters. \citet{celik2025dime} optimize a lower bound on the maximum entropy objective.

\textit{Sampling via Langevin Dynamics.} Instead of learning a policy to directly output optimal actions, this class of methods leverages the Q-function as an energy landscape to guide action generation. \citet{psenka2024learning} propose fitting a score network to the gradient of the Q-function, subsequently using Langevin dynamics to sample actions from the implied Boltzmann distribution. \citet{ishfaq2025langevin} integrate Langevin dynamics into the maximum entropy RL framework, using an uncertainty-aware critic to refine action sampling and improve exploration efficiency.

\textit{Iterative Weighted Regression.} This paradigm simplifies the online RL problem by treating it as a repeated sequence of offline RL phases. States are sampled from the replay buffer, and the policy is updated on these states using a weighted form of the diffusion or flow matching loss. \citet{ding2024diffusion} propose a Q-weighted diffusion policy loss to shift the policy towards high-Q actions. \citet{fan2025online} fine-tune flow models with online RL using a reward-weighted flow matching objective. \citet{ma2025efficient} introduce diffusion policy mirror descent (DPMD), which iteratively refines a diffusion policy via a Q-weighted diffusion loss. \citet{gao2026flow} augment the weighted loss with entropy regularization.

\textit{Targeting Boltzmann Distributions.} As discussed in the Introduction, this class of methods explicitly trains diffusion policies to sample from the Boltzmann distribution induced by the Q-function. The gradient-expectation family: \citet{akhound2024iterated} propose the iterated denoising energy matching (iDEM) algorithm, which constructs training targets via a weighted average of energy gradients. \citet{jain2025sampling} introduce diffusion Q-sampling (DQS), adapting iDEM to online RL by using a weighted average of Q-function gradients. The noise-expectation family: \citet{ma2025efficient} propose soft diffusion actor-critic (SDAC), which performs a weighted average over noise to construct the training target. \citet{dong2025maximum} introduce Q-weighted noise estimation (QNE), which similarly leverages a weighted noise average for training.

\section{Prior Methods as Special Cases}
\label{app:prior_methods}

In this section, we explain in detail how several prior methods \cite{akhound2024iterated, jain2025sampling, dong2025maximum, ma2025efficient} for training diffusion policies to sample from Boltzmann distributions can be recovered as special cases of our reverse flow matching framework. This viewpoint unifies two seemingly distinct estimator families: a \textit{noise-expectation} family \cite{ma2025efficient, dong2025maximum} that uses only evaluations of $Q$ and averages over noise, and a \textit{gradient-expectation} family \cite{akhound2024iterated, jain2025sampling} that instead computes and averages over $\nabla Q$. In our formulation, both arise from the same posterior mean estimation problem, differing only in the choice of test functions used to construct control variates.
For simplicity and to align with the derivations in this paper, we drop the state variable $s$, use $x_1$ to denote data (i.e., the sample; in RL, this corresponds to the action $a$), use $x_0$ to denote noise, and let $Q(x_1)$ correspond to $Q(s,a)$. The target Boltzmann distribution is $p_1(x_1)\propto \exp\!\left(\frac{1}{\lambda}Q(x_1)\right)$.

Recall from Theorem~\ref{thm:boltzmann_cv_mean} that for fixed $t$ and $x_t$, if we choose the isotropic test function $\Phi_t=\eta I_d$ (where $\eta\in\mathbb{R}$ is a constant) and assume a standard Gaussian source $p_0=\mathcal{N}(0,I_d)$, the posterior mean estimator for $X_0$ is given by
\begin{equation}
\nonumber
\mu_{0\mid t}(x_t) = (1-\eta)\,\mathbb{E}_{X_0\sim q_{0\mid t}^*(\cdot\mid x_t)}[X_0]
+ \eta\,\mathbb{E}_{X_0\sim q_{0\mid t}^*(\cdot\mid x_t)}\!\left[-\frac{1}{\lambda}\frac{\beta_t}{\alpha_t}\left[\nabla_{x_1}Q(x_1)\right]_{x_1=\frac{1}{\alpha_t}x_t-\frac{\beta_t}{\alpha_t}X_0}\right].
\end{equation}
Similarly, the posterior mean estimator for $X_1$ can be obtained by starting from the data-posterior form and deriving the corresponding control variate. The result is
\begin{equation}
\nonumber
\mu_{1 \mid t}\left(x_t\right)=(1-\eta) \mathbb{E}_{X_1 \sim q_{1 \mid t}^*\left(\cdot \mid x_t\right)}\left[X_1\right]+\eta\left(\frac{1}{\alpha_t} x_t+\frac{1}{\lambda}\frac{\beta_t^2}{\alpha_t^2} \mathbb{E}_{X_1 \sim q_{1 \mid t}^*\left(\cdot \mid x_t\right)}\left[\nabla Q\left(X_1\right)\right]\right).
\end{equation}
Setting $\eta=0$ yields
\begin{equation}
\label{eta_0}
\left\{
\begin{aligned}
\mu_{0\mid t}(x_t) &= \E_{X_0\sim q_{0\mid t}^*(\cdot\mid x_t)}\!\left[ X_0 \right],\\
\mu_{1\mid t}(x_t) &= \E_{X_1\sim q_{1\mid t}^*(\cdot\mid x_t)}\!\left[ X_1 \right].
\end{aligned}
\right.
\end{equation}
In contrast, setting $\eta=1$ yields
\begin{equation}
\label{eta_1}
\left\{
\begin{aligned}
\mu_{0\mid t}(x_t) &= \E_{X_0\sim q_{0\mid t}^*(\cdot\mid x_t)}\!\left[
-\frac{1}{\lambda}\frac{\beta_t}{\alpha_t}
\left[\nabla_{x_1}Q(x_1)\right]_{x_1=\frac{1}{\alpha_t}x_t-\frac{\beta_t}{\alpha_t}X_0}
\right],\\
\mu_{1\mid t}(x_t) &= \frac{1}{\alpha_t}x_t
+\frac{1}{\lambda}\frac{\beta_t^2}{\alpha_t^2}\,
\E_{X_1\sim q_{1\mid t}^*(\cdot\mid x_t)}\!\left[\nabla Q(X_1)\right].
\end{aligned}
\right.
\end{equation}

The iterated denoising energy matching (iDEM) algorithm proposed in~\cite{akhound2024iterated} uses the data-posterior estimator $\mu_{1\mid t}(x_t)$ in the $\eta=1$ case \eqref{eta_1} and trains the model to predict the score. Under the standard Gaussian source $p_0=\mathcal{N}(0,I_d)$, the score function can be written as
\begin{equation}
\nonumber
s_t(x_t)
=-\frac{1}{\beta_t}\E[X_0\mid X_t=x_t]
=-\frac{1}{\beta_t}\left(\frac{1}{\beta_t}x_t-\frac{\alpha_t}{\beta_t}\E[X_1\mid X_t=x_t]\right)
=-\frac{1}{\beta_t^2}x_t+\frac{\alpha_t}{\beta_t^2}\E[X_1\mid X_t=x_t].
\end{equation}
Replacing $\E[X_1\mid X_t=x_t]$ with
\begin{equation}
\nonumber
\mu_{1\mid t}(x_t)
=\frac{1}{\alpha_t}x_t+\frac{1}{\lambda}\frac{\beta_t^2}{\alpha_t^2}\,
\E_{X_1\sim q_{1\mid t}^*(\cdot\mid x_t)}\!\left[\nabla Q(X_1)\right],
\end{equation}
we obtain
\begin{equation}
\nonumber
\begin{aligned}
s_t(x_t)
&=-\frac{1}{\beta_t^2}x_t+\frac{\alpha_t}{\beta_t^2}\left(
\frac{1}{\alpha_t}x_t+\frac{1}{\lambda}\frac{\beta_t^2}{\alpha_t^2}\,
\E_{X_1\sim q_{1\mid t}^*(\cdot\mid x_t)}\!\left[\nabla Q(X_1)\right]\right)\\
&=\frac{1}{\lambda}\frac{1}{\alpha_t}\,
\E_{X_1\sim q_{1\mid t}^*(\cdot\mid x_t)}\!\left[\nabla Q(X_1)\right].
\end{aligned}
\end{equation}
iDEM adopts the variance-exploding (VE) noise schedule in diffusion models. In our notation, this corresponds to setting $\alpha_t=1$ and $\beta_t=\sigma_{\min}\left(\frac{\sigma_{\max}}{\sigma_{\min}}\right)^{1-t}$. Note that we use the forward-time convention, hence the exponent $1-t$. Substituting $\alpha_t=1$ into the expression above yields $s_t(x_t)= \frac{1}{\lambda}\,\E_{X_1 \sim q_{1 \mid t}^*(\cdot \mid x_t)}\!\left[\nabla Q(X_1)\right]$. If we take $E=-Q$ and $\lambda=1$, then $s_t(x_t)=-\E_{X_1 \sim q_{1 \mid t}^*(\cdot \mid x_t)}\!\left[\nabla E(X_1)\right]$, which matches exactly the training target used by iDEM.
Recall that $q_{1 \mid t}^*\!\left(x_1 \mid x_t\right) \propto p_1\!\left(x_1\right)\, p_0\!\left(\frac{1}{\beta_t} x_t-\frac{\alpha_t}{\beta_t} x_1\right)$. For $p_0=\mathcal{N}(0,I_d)$, the factor $p_0\!\left(\frac{1}{\beta_t} x_t-\frac{\alpha_t}{\beta_t} x_1\right)$, viewed as a function of $x_1$, is essentially a Gaussian $\mathcal{N}\!\left(\frac{1}{\alpha_t}x_t,\frac{\beta_t^2}{\alpha_t^2} I_d\right)$, which is exactly the proposal distribution for $x_1$ used by iDEM when performing SNIS.
The diffusion Q-sampling (DQS) algorithm in \cite{jain2025sampling} follows the iDEM approach. The training target of DQS is exactly $\frac{1}{\lambda}\,\E_{X_1 \sim q_{1 \mid t}^*(\cdot \mid x_t)}\!\left[\nabla Q(X_1)\right]$.

The Q-weighted noise estimation (QNE) algorithm proposed in~\cite{dong2025maximum} uses the noise-posterior estimator $\mu_{0\mid t}(x_t)$ in the $\eta=0$ case \eqref{eta_0} and trains a noise-prediction network. The estimator is $\E_{X_0 \sim q_{0 \mid t}^*(\cdot \mid x_t)}\!\left[X_0\right]$, which matches exactly the training target used by QNE.
QNE uses the variance-preserving (VP) noise schedule in diffusion models, which corresponds to $\alpha_t=\exp\!\left(-\frac{1}{4}(1-t)^2(\beta_{\max}-\beta_{\min})-\frac{1}{2}(1-t)\beta_{\min}\right)$
and $\beta_t=\sqrt{1-\alpha_t^2}$. Note that we use the forward-time convention, and that $\beta_{\max}$ and $\beta_{\min}$ are a slight abuse of notation (unrelated to $\beta_t$). The proposal distribution for $x_0$ used by QNE when performing SNIS is $p_0=\mathcal{N}(0,I_d)$, which is a natural choice since
$q_{0 \mid t}^*\!\left(x_0 \mid x_t\right) \propto p_0\!\left(x_0\right)\, p_1\!\left(\frac{1}{\alpha_t} x_t-\frac{\beta_t}{\alpha_t} x_0\right)$.
The reweighted score matching (RSM) algorithm in~\cite{ma2025efficient} adopts the same estimator $\mu_{0\mid t}(x_t)$ as QNE and trains a score-prediction network. For $p_0=\mathcal{N}(0,I_d)$, the score function is
\begin{equation}
\nonumber
s_t(x_t)=-\frac{1}{\beta_t}\E\!\left[X_0 \mid X_t=x_t\right]
= -\frac{1}{\beta_t}\mu_{0 \mid t}(x_t)
= -\frac{1}{\beta_t}\E_{X_0 \sim q_{0 \mid t}^*(\cdot \mid x_t)}\!\left[X_0\right],
\end{equation}
which matches exactly the training target used by RSM. RSM uses a discrete-time diffusion model, i.e., a denoising diffusion probabilistic model (DDPM). It also uses $p_0=\mathcal{N}(0,I_d)$ as the proposal distribution for $x_0$ for SNIS.

In summary, the noise-expectation family (QNE and RSM) corresponds to the noise-posterior estimator with $\eta=0$, whereas the gradient-expectation family (iDEM and DQS) corresponds to the data-posterior estimator with $\eta=1$. In both cases, these methods are recovered as concrete instantiations of our reverse flow matching framework, arising from particular choices of test functions used to construct control variates.

\section{Proofs of Theoretical Results}
\label{app:proof}
We restate each result before its proof for ease of reading.
\rfmequiv*
\begin{proof}
We focus on the noise-posterior objective $\mathcal{L}_{\mathrm{RFM-N}}$, noting that the data-posterior case follows by symmetry. For brevity, write $V(X_0, X_1) = \dot{\alpha}_t X_1 + \dot{\beta}_t X_0$. We decompose $V(X_0, X_1)$ into its posterior mean and a residual term:
\begin{equation}
\nonumber
V(X_0, X_1) = \mathbb{E}[V(X_0, X_1) \mid X_t] + \left(V(X_0, X_1) - \mathbb{E}[V(X_0, X_1) \mid X_t]\right).
\end{equation}
Consider the squared error loss conditioned on a fixed $t$ and $X_t$. Expanding the quadratic yields:
\begin{equation}
\label{expand_rfm}
\begin{aligned}
&\mathbb{E}_{(X_0,X_1)\sim q_{0,1\mid t}^*(\cdot \mid X_t)}\left[\left\|v_t^\theta(X_t)-V(X_0, X_1)\right\|_2^2\right] \\
&= \left\|v_t^\theta(X_t) - \mathbb{E}\left[V(X_0, X_1) \mid X_t\right]\right\|_2^2 
+ \mathbb{E}\left[\left\|V(X_0, X_1) - \mathbb{E}\left[V(X_0, X_1) \mid X_t\right]\right\|_2^2 \mid X_t\right] \\
&\quad + 2 \left(v_t^\theta(X_t) - \mathbb{E}\left[V(X_0, X_1) \mid X_t\right]\right)^\top \mathbb{E}\left[V(X_0, X_1) - \mathbb{E}\left[V(X_0, X_1) \mid X_t\right] \mid X_t\right]\\
&= \left\|v_t^\theta(X_t) - \mathbb{E}\left[V(X_0, X_1) \mid X_t\right]\right\|_2^2 
+ \mathbb{E}\left[\left\|V(X_0, X_1) - \mathbb{E}\left[V(X_0, X_1) \mid X_t\right]\right\|_2^2 \mid X_t\right],
\end{aligned}
\end{equation}
where the last equality follows from the fact that the conditional expectation of the residual is zero, i.e., $\mathbb{E}[V(X_0, X_1) - \mathbb{E}[V(X_0, X_1) \mid X_t] \mid X_t] = 0$.
The second term in the final expression represents the conditional variance of the target velocity, which is independent of $\theta$. Consequently, minimizing the original objective $\mathcal{L}_{\mathrm{RFM}}$ is equivalent to minimizing the first term, which regresses $v_t^\theta(X_t)$ onto the posterior expectation of the velocity.
To recover the specific form of $\mathcal{L}_{\mathrm{RFM-N}}$, we express $X_1$ in terms of $X_t$ and $X_0$ via the interpolation constraint $X_1 = \frac{1}{\alpha_t}X_t - \frac{\beta_t}{\alpha_t}X_0$. Linearity of expectation yields:
\begin{equation}
\nonumber
\begin{aligned}
\mathbb{E}[V(X_0, X_1) \mid X_t] &= \dot{\alpha}_t \mathbb{E}\left[\frac{1}{\alpha_t}X_t - \frac{\beta_t}{\alpha_t}X_0 \;\bigg|\; X_t\right] + \dot{\beta}_t \mathbb{E}[X_0 \mid X_t] \\
&= \frac{\dot{\alpha}_t}{\alpha_t} X_t + \left(\dot{\beta}_t - \frac{\dot{\alpha}_t \beta_t}{\alpha_t}\right) \mathbb{E}[X_0 \mid X_t].
\end{aligned}
\end{equation}
This matches the target defined in \eqref{rfm_noise}. Thus, $\mathcal{L}_{\mathrm{RFM}}$ and $\mathcal{L}_{\mathrm{RFM-N}}$ differ only by an additive constant, implying they share the same global minimizers and gradients.
\end{proof}

\rfmcfmsameminimizers*
\begin{proof}
By Proposition~\ref{prop:rfm_equiv}, $\mathcal{L}_{\mathrm{RFM\text{-}N}}$ and $\mathcal{L}_{\mathrm{RFM\text{-}D}}$ differ from $\mathcal{L}_{\mathrm{RFM}}$ only by additive constants independent of $\theta$. Therefore, it suffices to show that $\mathcal{L}_{\mathrm{RFM}}$ and $\mathcal{L}_{\mathrm{CFM}}$ share the same set of global minimizers.
Fix $t\in[0,1]$ and write $V(X_0,X_1)=\dot{\alpha}_t X_1+\dot{\beta}_t X_0$. In conditional flow matching, the forward construction samples $X_0\sim p_0$ and $X_1\sim p_1$, and then sets $X_t=\alpha_t X_1+\beta_t X_0$. The conditional law of $(X_0,X_1)$ given $X_t$ induced by this construction coincides precisely with the posterior coupling $q^*_{0,1\mid t}(\cdot\mid X_t)$ used in reverse flow matching.
For this fixed $t$, Lemma~\ref{lem:grad_equiv}, combined with linear interpolation and \eqref{marginal_velocity}, implies that $\mathcal{L}_{\mathrm{CFM}}$ shares the same set of global minimizers as
\begin{equation}
\label{cfm_core}
\mathbb{E}_{X_t\sim p_t}\!\left[\left\|v_t^\theta(X_t)-\mathbb{E}\!\left[V(X_0,X_1)\mid X_t\right]\right\|_2^2\right],
\end{equation}
where $p_t$ denotes the marginal distribution of $X_t$ under the forward construction.
Analogously, the decomposition in \eqref{expand_rfm} shows that the reverse flow matching loss is equivalent, up to an additive constant independent of $\theta$, to
\begin{equation}
\label{rfm_core}
\mathbb{E}_{X_t\sim \hat p_t}\!\left[\left\|v_t^\theta(X_t)-\mathbb{E}\!\left[V(X_0,X_1)\mid X_t\right]\right\|_2^2\right],
\end{equation}
where $\hat p_t$ is the proposal used to sample $X_t$.
Thus, the two objectives \eqref{cfm_core} and \eqref{rfm_core} share the same regression target $v_t(x)=\mathbb{E}[V(X_0,X_1)\mid X_t=x]$, differing only in the weighting measure over $X_t$. Under the stated richness assumption, the set of global minimizers for \eqref{cfm_core} consists of those $v_t^\theta$ satisfying $v_t^\theta(x)=v_t(x)$ for $p_t$-almost every $x$, while the set for \eqref{rfm_core} consists of those satisfying the equality for $\hat p_t$-almost every $x$. Since $p_t$ and $\hat p_t$ are mutually absolutely continuous, these almost-everywhere conditions are equivalent, implying that the sets of global minimizers coincide.
\end{proof}

\scalarlangevin*

\begin{proof}
Since $p$ is $C^1$, we have $\nabla \log p = (\nabla p)/p$. Hence, by the product rule,
\begin{equation}
\nonumber
\nabla\cdot(\phi(x)p(x))
= p(x)\,\nabla\cdot\phi(x) + \phi(x) \cdot \nabla p(x)
= p(x)\Big(\nabla\cdot\phi(x) + \phi(x)\cdot \nabla\log p(x)\Big)
= p(x)\,(\mathcal{T}_p\phi)(x).
\end{equation}
Integrating over $B_R$ and applying the divergence theorem yields
\begin{equation}
\nonumber
\int_{B_R} (\mathcal{T}_p\phi)(x)\,p(x)\,dx
= \int_{B_R} \nabla\cdot(\phi(x)p(x))\,dx
= \int_{\partial B_R} p(x)\,\phi(x)\cdot n(x)\,dS(x).
\end{equation}
Taking $R\to\infty$ and using the assumed boundary condition gives
\(
\int_{\R^d} (\mathcal{T}_p\phi)(x)\,p(x)\,dx = 0,
\)
which is equivalent to $\mathbb{E}_{X\sim p}[(\mathcal{T}_p\phi)(X)]=0$.
\end{proof}

\vectorlangevin*

\begin{proof}
For each $j=1,\cdots,m$, Lemma~\ref{scalar_stein} implies
\(
\mathbb{E}_{X\sim p}\!\left[(\mathcal{T}_p\phi_j)(X)\right]=0.
\)
Stacking these $m$ scalar identities yields
\(
\mathbb{E}_{X\sim p}\!\left[(\mathcal{T}_{p,m}\Phi)(X)\right]=0\in\R^m.
\)
\end{proof}

\poissonoptimal*

\begin{proof}
The estimator has zero variance if and only if it is constant on the support of $q_{0\mid t}^*(\cdot\mid x_t)$. Since its expectation is $\mu_{0\mid t}(x_t)$ by Proposition~\ref{vec_stein}, the result \eqref{eq:zero_var_condition} follows immediately.
\end{proof}

Before presenting the proofs for our variance-reduction choices, we recall the asymptotic behavior of SNIS. This result provides an explicit objective for choosing $\Lambda$ to reduce the asymptotic variance.

\begin{lemma}
\label{lem:snis_variance}
Let $q$ be a target distribution on $\R^d$ with unnormalized density $\tilde q$, and let $\bar q$ be a proposal distribution such that $q$ is absolutely continuous with respect to $\bar q$. Define the unnormalized importance weight
\[
w(x):=\frac{\tilde q(x)}{\bar q(x)}.
\]
Let $f:\R^d\to\R^d$ be measurable and define $\mu := \E_{X\sim q}[f(X)]$. Given samples $\{X^{(i)}\}_{i=1}^N\sim \bar q$, the self-normalized importance sampling estimator is
\[
\hat \mu_{\mathrm{SNIS}}
:=
\frac{\sum_{i=1}^N w(X^{(i)})\,f(X^{(i)})}{\sum_{i=1}^N w(X^{(i)})}.
\]
Assume $\E_{\bar q}[w(X)]\in(0,\infty)$ and $\E_{\bar q}[w(X)^2\|f(X)\|_2^2]<\infty$. Then, as $N\to\infty$,
\begin{equation}
\nonumber
\sqrt{N}\left(\hat \mu_{\mathrm{SNIS}}-\mu\right)
\Rightarrow
\mathcal{N}(0,\Sigma),
\end{equation}
where the asymptotic covariance matrix is
\begin{equation}
\label{eq:snis_asymp_cov}
\Sigma
=
\frac{1}{\E_{\bar q}[w(X)]^2}\,
\E_{\bar q}\!\left[w(X)^2\left(f(X)-\mu\right)\left(f(X)-\mu\right)^\top\right].
\end{equation}
\end{lemma}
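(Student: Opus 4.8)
The plan is to treat $\hat\mu_{\mathrm{SNIS}}$ as a ratio of two i.i.d.\ sample averages and apply a multivariate central limit theorem together with Slutsky's theorem (equivalently, the delta method). First I would introduce the per-sample summands and their population means. Write $A_N := \frac1N\sum_{i=1}^N w(X^{(i)})f(X^{(i)})\in\R^d$ and $B_N := \frac1N\sum_{i=1}^N w(X^{(i)})\in\R$, so that $\hat\mu_{\mathrm{SNIS}} = A_N/B_N$. Letting $Z := \int \tilde q$ denote the (unknown) normalizing constant, a direct change of measure using $q\ll\bar q$ gives $\E_{\bar q}[w] = \int \tilde q = Z$ and $\E_{\bar q}[wf] = \int \tilde q\, f = Z\mu$. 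Hence $\bar B := \E_{\bar q}[w] = Z\in(0,\infty)$ and $\bar A := \E_{\bar q}[wf] = Z\mu$, so the population ratio satisfies $\bar A/\bar B = \mu$. This is the key conceptual observation: the unknown constant $Z$ cancels, so self-normalization recovers $\E_q[f]$ from the unnormalized $\tilde q$ alone.

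Next I would linearize the ratio about its limit via the algebraic identity
\[
\hat\mu_{\mathrm{SNIS}} - \mu = \frac{A_N - \mu B_N}{B_N} = \frac{1}{B_N}\cdot\frac1N\sum_{i=1}^N Z^{(i)}, \qquad Z^{(i)} := w(X^{(i)})\big(f(X^{(i)}) - \mu\big),
\]
whose summands are i.i.d.\ with mean $\E_{\bar q}[w(f-\mu)] = \bar A - \mu\bar B = 0$ and covariance $\E_{\bar q}[w^2(f-\mu)(f-\mu)^\top]$. By the strong law of large numbers, $B_N \to \bar B = \E_{\bar q}[w]$ almost surely, which needs only the first-moment hypothesis $\E_{\bar q}[w]<\infty$. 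By the multivariate CLT applied to $Z^{(i)}$, $\frac1{\sqrt N}\sum_{i=1}^N Z^{(i)} \Rightarrow \mathcal N\big(0,\,\E_{\bar q}[w^2(f-\mu)(f-\mu)^\top]\big)$. Combining these through Slutsky's theorem (the scalar $1/B_N\to 1/\bar B$ multiplies the asymptotically Gaussian numerator) yields $\sqrt N(\hat\mu_{\mathrm{SNIS}} - \mu)\Rightarrow \frac1{\bar B}\,\mathcal N(0,\cdot)$, and scaling a centered Gaussian by $1/\bar B$ scales its covariance by $1/\bar B^2 = 1/\E_{\bar q}[w]^2$, producing exactly the $\Sigma$ of \eqref{eq:snis_asymp_cov}. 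Equivalently, one may invoke the multivariate delta method for $g(a,b)=a/b$ at $(\bar A,\bar B)$, whose Jacobian $\big[\,\bar B^{-1}I_d,\ -\bar B^{-2}\bar A\,\big]$ transports the joint covariance of $(wf,\,w)$ to the same $\Sigma$; the decomposition above is just this computation made explicit, with the convenient feature that the cross terms in the centered summand $Z^{(i)}$ have already cancelled.

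The main technical point, and the only genuinely delicate step, is verifying finiteness of the limiting covariance, i.e.\ $\E_{\bar q}[w^2\|f-\mu\|_2^2]<\infty$, which is what licenses the CLT for $Z^{(i)}$; the remaining arguments are the standard SLLN/CLT/Slutsky machinery for i.i.d.\ sums. I would derive this from the stated hypothesis $\E_{\bar q}[w^2\|f\|_2^2]<\infty$ via $\|f-\mu\|_2^2\le 2\|f\|_2^2 + 2\|\mu\|_2^2$ (using finiteness of $\E_{\bar q}[w^2]$ so that the constant term integrates), while the well-definedness of $\mu$ itself follows since Cauchy--Schwarz gives $\E_{\bar q}[w\|f\|_2]\le(\E_{\bar q}[w^2\|f\|_2^2])^{1/2}<\infty$, so $\bar A$ and hence $\mu=\bar A/\bar B$ are finite. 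This moment bookkeeping is where I would be most careful; everything else reduces to applying classical limit theorems to the clean centered summand $Z^{(i)}$.
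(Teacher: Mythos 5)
Your proof is correct, and it is worth noting that the paper does not actually prove this lemma at all: its ``proof'' is a one-line citation to the importance-sampling literature (Tokdar and Kass), treating the asymptotic normality of SNIS as classical. Your argument is precisely the standard derivation that underlies that citation: write $\hat\mu_{\mathrm{SNIS}}$ as a ratio of i.i.d.\ averages, observe that the normalizing constant $Z$ cancels so that $\bar A/\bar B=\mu$, center the numerator via the identity $\hat\mu_{\mathrm{SNIS}}-\mu=(A_N-\mu B_N)/B_N$ so that the summands $w(X^{(i)})(f(X^{(i)})-\mu)$ are mean-zero, and then combine the multivariate CLT with Slutsky (equivalently the delta method for $g(a,b)=a/b$). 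Your choice to center first, so that the cross terms of the delta-method Jacobian have already cancelled, is the cleanest route and gives the covariance \eqref{eq:snis_asymp_cov} directly. One small caveat in your moment bookkeeping, which is really a gap in the lemma's stated hypotheses rather than in your argument: finiteness of $\E_{\bar q}[w^2\|f-\mu\|_2^2]$ genuinely requires $\E_{\bar q}[w(X)^2]<\infty$ (unless $\mu=0$), and this does not follow from $\E_{\bar q}[w]<\infty$ together with $\E_{\bar q}[w^2\|f\|_2^2]<\infty$ alone, since $f$ could vanish where $w$ is large. You correctly flag that you are using $\E_{\bar q}[w^2]<\infty$; it would be worth stating explicitly that this is an additional (standard, and implicitly intended) assumption. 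Everything else is sound.
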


\begin{proof}
This is a classical result. See \cite{tokdar2010importance} for example.
\end{proof}

Lemma~\ref{lem:snis_variance} shows that, for the $\Lambda$-controlled estimator \eqref{eq:snis_cv}, the asymptotic covariance is determined by the second moment of
$w(x_0,x_t)^2\big(f_\Lambda(x_0,x_t)-\mu_{0\mid t}(x_t)\big)$ under the proposal $\bar q$, where
\begin{equation}
\nonumber
f_\Lambda(x_0,x_t) := x_0 + \diag(\Lambda)\,s^*_{0\mid t}(x_0,x_t).
\end{equation}
We can therefore choose $\Lambda$ by minimizing the scalar criterion $\mathrm{tr}(\Sigma)$.

\optgamma*

\begin{proof}
By Lemma~\ref{lem:snis_variance} with $f$ replaced by $f_\Lambda(\cdot,x_t)$, minimizing $\mathrm{tr}(\Sigma)$ is equivalent to minimizing
\begin{equation}
\nonumber
\E_{\bar q}\!\left[w(X_0, x_t)^2\left\|f_\Lambda(X_0,x_t)-\mu_{0\mid t}(x_t)\right\|_2^2\right].
\end{equation}
Using $f_\Lambda(x_0,x_t)=x_0+\diag(\Lambda)s^*_{0\mid t}(x_0,x_t)$, the objective decomposes across coordinates:
\begin{equation}
\nonumber
\sum_{j=1}^d
\E_{\bar q}\!\left[
w(X_0,x_t)^2\left(
X_{0,j}-\mu_{0\mid t,j}(x_t)+\Lambda_j s^*_{0\mid t,j}(X_0,x_t)
\right)^2
\right].
\end{equation}
Each summand is a convex quadratic function of $\Lambda_j$. Differentiating with respect to $\Lambda_j$ and setting the derivative to zero yields \eqref{eq:gamma_star}.
\end{proof}

\opteta*

\begin{proof}
By Lemma~\ref{lem:snis_variance} with $f$ replaced by $f_\eta(\cdot,x_t)=x_0+\eta\, s^*_{0\mid t}(x_0,x_t)$, minimizing $\mathrm{tr}(\Sigma)$ is equivalent to minimizing
\begin{equation}
\nonumber
\E_{\bar q}\!\left[w(X_0, x_t)^2\left\|f_\eta(X_0,x_t)-\mu_{0\mid t}(x_t)\right\|_2^2\right]=\E_{\bar q}\!\left[w(X_0,x_t)^2\left\|(X_0-\mu_{0\mid t}(x_t))+\eta\, s^*_{0\mid t}(X_0,x_t)\right\|_2^2\right],
\end{equation}
which is a convex quadratic function of $\eta$.
Differentiating with respect to $\eta$ and setting the derivative to zero yields \eqref{eq:eta_star}.
\end{proof}

\begin{manualtheoremrestatement}{thm:boltzmann_cv_mean}
Assume the target density has the Boltzmann form $p_1(x_1)\propto \exp\!\left(\frac{1}{\lambda}Q(x_1)\right)$. Let $\Phi_t(x_0,x_t)=\diag\{h_{t,1}(x_0,x_t),\ldots,h_{t,d}(x_0,x_t)\}$ be a diagonal test function with constant entries
$h_{t,j}(x_0,x_t)\equiv \Lambda_j$, and write $\Lambda=(\Lambda_1,\ldots,\Lambda_d)^\top$.
Then the induced control variate satisfies
\begin{equation*}
\tag{\getrefnumber{eq:boltzmann_cv}}
g_{\Phi_t}(x_0,x_t)
=
\diag(\Lambda)\,\nabla_{x_0}\log p_0(x_0)
-\frac{1}{\lambda}\frac{\beta_t}{\alpha_t}\diag(\Lambda)
\left[\nabla_{x_1}Q(x_1)\right]_{x_1=\frac{1}{\alpha_t}x_t-\frac{\beta_t}{\alpha_t}x_0},
\end{equation*}
and the posterior mean estimator can be expressed as
\begin{equation*}
\tag{\getrefnumber{eq:boltzmann_mean_rep}}
\mu_{0\mid t}(x_t)
=
\E_{X_0\sim q_{0\mid t}^*(\cdot\mid x_t)}\!\left[
X_0
+
\diag(\Lambda)\,\nabla_{x_0}\log p_0(X_0)
-\frac{1}{\lambda}\frac{\beta_t}{\alpha_t}\,\diag(\Lambda)
\left[\nabla_{x_1}Q(x_1)\right]_{x_1=\frac{1}{\alpha_t}x_t-\frac{\beta_t}{\alpha_t}X_0}
\right].
\end{equation*}
Moreover, if $p_0=\mathcal{N}(0,I_d)$, we additionally have the identity
\begin{equation*}
\tag{\getrefnumber{eq:two_terms_equal}}
\E_{X_0\sim q_{0\mid t}^*(\cdot\mid x_t)}[X_0]
=
\E_{X_0\sim q_{0\mid t}^*(\cdot\mid x_t)}\!\left[
-\frac{1}{\lambda}\frac{\beta_t}{\alpha_t}
\left[\nabla_{x_1}Q(x_1)\right]_{x_1=\frac{1}{\alpha_t}x_t-\frac{\beta_t}{\alpha_t}X_0}
\right].
\end{equation*}
If we further impose isotropic coefficients $\Lambda_j\equiv \eta$ for $j=1,\ldots,d$ (equivalently, $\Phi_t=\eta I_d$), then the posterior mean simplifies to a linear combination:
\begin{equation*}
\tag{\getrefnumber{eq:iso_mean_rep}}
\mu_{0\mid t}(x_t)
=
(1-\eta)\,\E_{X_0\sim q_{0\mid t}^*(\cdot\mid x_t)}[X_0]
+\eta\,\E_{X_0\sim q_{0\mid t}^*(\cdot\mid x_t)}\!\left[
-\frac{1}{\lambda}\frac{\beta_t}{\alpha_t}
\left[\nabla_{x_1}Q(x_1)\right]_{x_1=\frac{1}{\alpha_t}x_t-\frac{\beta_t}{\alpha_t}X_0}
\right].
\end{equation*}
\end{manualtheoremrestatement}

\begin{proof}
Recall the control variate form $g_{\Phi_t}(x_0, x_t) = \diag(\Lambda) \nabla_{x_0} \log q_{0\mid t}^*(x_0 \mid x_t)$.
Using the posterior factorization $q_{0\mid t}^*(x_0 \mid x_t) \propto p_0(x_0) p_1\left(\frac{1}{\alpha_t}x_t-\frac{\beta_t}{\alpha_t}x_0\right)$, we have $\log q_{0\mid t}^*(x_0 \mid x_t) = \log p_0(x_0) + \frac{1}{\lambda} Q\left(\frac{1}{\alpha_t}x_t - \frac{\beta_t}{\alpha_t}x_0\right) + C$, where $C$ depends only on $x_t$. Differentiating with respect to $x_0$ via the chain rule, we have
\begin{equation}
\nonumber
\nabla_{x_0} \log q_{0\mid t}^*(x_0 \mid x_t) = \nabla_{x_0} \log p_0(x_0) + \frac{1}{\lambda} \left[\nabla_{x_1} Q(x_1)\right]_{x_1=\frac{1}{\alpha_t}x_t-\frac{\beta_t}{\alpha_t}x_0} \cdot \left(-\frac{\beta_t}{\alpha_t} I_d\right).
\end{equation}
Multiplying by $\diag(\Lambda)$ produces \eqref{eq:boltzmann_cv}. Substituting it into the identity $\mathbb{E}[X_0] = \mathbb{E}[X_0 + g_{\Phi_t}(X_0, x_t)]$ yields \eqref{eq:boltzmann_mean_rep}.
For the case $p_0 = \mathcal{N}(0, I_d)$, we have $\nabla_{x_0} \log p_0(x_0) = -x_0$.
Since $\mathbb{E}_{X_0 \sim q_{0\mid t}^*}[g_{\Phi_t}(X_0, x_t)] = 0$ holds for any $\Lambda$, we choose $\Lambda_j = 1$ to obtain $\mathbb{E}\left[-X_0 - \frac{1}{\lambda}\frac{\beta_t}{\alpha_t} \nabla_{x_1} Q(x_1)\right] = 0$, which proves \eqref{eq:two_terms_equal}.
Finally, substituting $\Phi_t = \eta I_d$ and $\nabla_{x_0} \log p_0(x_0) = -x_0$ into the estimator $X_0 + g_{\Phi_t}$ yields \eqref{eq:iso_mean_rep}.
\end{proof}

\section{Reverse Score Matching}
\label{app:reverse_score_matching}

In this section, we demonstrate how to extend the reverse flow matching framework to learn score-based models when direct samples from the target distribution $p_1$ are unavailable and the source distribution $p_0$ goes beyond the Gaussian case. We name this extension \textit{reverse score matching}.

Consider the stochastic differential equation (SDE) given by
\begin{equation}
\nonumber
\label{eq:sde_sampling}
dX_t = \left(v_t(X_t) + \frac{1}{2}\sigma_t^2 s_t(X_t)\right) dt + \sigma_t dW_t, \quad X_{t=0} \sim p_0,
\end{equation}
where $\sigma_t > 0$ determines the diffusion level and $W_t$ is a standard Brownian motion. To simulate this process, one requires access to both the velocity field $v_t$ and the score function $s_t$.

The velocity field $v_t$ and the score function $s_t$ are intrinsic properties of the marginal probability path $(p_t)_{t\in[0,1]}$. However, the computational relationship between them depends crucially on the choice of source distribution $p_0$. When the source distribution is Gaussian (e.g., $p_0=\mathcal{N}(0,I_d)$), the score function admits a closed-form expression and is linearly related to the velocity field.
Consider the linear interpolation $X_t=\alpha_t X_1+\beta_t X_0$.
The conditional probability path $p_{t\mid 1}(x_t\mid x_1)$ is Gaussian: $p_{t\mid 1}(x_t\mid x_1)=\mathcal{N}\!\left(\alpha_t x_1,\ \beta_t^2 I_d\right)$.
Consequently, the score function can be written as
\begin{equation}
\nonumber
s_t(x_t)
= \nabla_{x_t}\log p_t(x_t)
= -\frac{1}{\beta_t}\,\mathbb{E}\!\left[X_0 \mid X_t=x_t\right]= \frac{1}{\beta_t}\frac{\dot{\alpha}_t x_t - \alpha_t v_t(x_t)}{\alpha_t \dot{\beta}_t-\dot{\alpha}_t \beta_t}.
\end{equation}
In this regime, learning the velocity field $v_t$ via reverse flow matching is sufficient to recover $s_t$ and enable SDE-based sampling.

For a general source distribution $p_0$, no such simple algebraic link exists. While $v_t$ and $s_t$ are mathematically coupled through the continuity equation and the Fokker--Planck equation, this connection does not, in general, yield a tractable expression for $s_t$ in terms of $v_t$. Consequently, even after learning $v_t$ via reverse flow matching, one must learn $s_t$ separately. We extend the reverse flow matching framework to reverse score matching, which allows us to train score-based models even when direct samples from $p_1$ are unavailable and the source distribution $p_0$ is arbitrary.

To learn the parameterized score function $s_t^\theta$, we begin with the standard (conceptual) score matching loss:
\begin{equation}
\nonumber
\mathcal{L}_{\mathrm{SM}}(\theta)=\mathbb{E}_{t \sim \mathcal{U}[0,1], X_t \sim p_t}\left[\left\|s_t^\theta\left(X_t\right)-s_t\left(X_t\right)\right\|_2^2\right].
\end{equation}
As with flow matching, the marginal score $s_t(x)$ is intractable. We therefore resort to conditional score matching. The conditional score functions are defined as $s_{t\mid 0}(x_t \mid x_0)=\nabla_{x_t} \log p_{t \mid 0}(x_t \mid x_0)$ and $s_{t\mid 1}(x_t \mid x_1)=\nabla_{x_t} \log p_{t \mid 1}(x_t \mid x_1)$.
The marginal and conditional scores are related via
\begin{equation}
\nonumber
s_t(x)
=\mathbb{E}\left[s_{t \mid 1}\left(X_t \mid X_1\right) \mid X_t=x\right]
=\mathbb{E}\left[s_{t \mid 0}\left(X_t \mid X_0\right) \mid X_t=x\right].
\end{equation}
The conditional score matching objective regresses $s_t^\theta$ onto these conditional targets:
\begin{equation}
\nonumber
\begin{aligned}
\mathcal{L}_{\mathrm{CSM}}(\theta)
&  =\mathbb{E}_{t \sim \mathcal{U}[0,1], X_0 \sim p_0, X_t \sim p_{t \mid 0}}\left[\left\|s_t^\theta\left(X_t\right)-s_{t \mid 0}\left(X_t \mid X_0\right)\right\|_2^2\right]\\
& =\mathbb{E}_{t \sim \mathcal{U}[0,1], X_1 \sim p_1, X_t \sim p_{t \mid 1}}\left[\left\|s_t^\theta\left(X_t\right)-s_{t \mid 1}\left(X_t \mid X_1\right)\right\|_2^2\right] + \mathrm{constant}.
\end{aligned}
\end{equation}
Similar to Lemma~\ref{lem:grad_equiv}, it is well-established that $\mathcal{L}_{\mathrm{SM}}$ and $\mathcal{L}_{\mathrm{CSM}}$ are equivalent up to a constant independent of $\theta$.

Conditional score matching relies on the forward construction of samples $(X_0, X_1)$ and interpolant $X_t$. To address the setting where $X_1\sim p_1$ is unavailable, we apply the same reverse inference logic in reverse flow matching. Under linear interpolation $X_t = \alpha_t X_1 + \beta_t X_0$ and independent coupling, the reverse score matching loss is defined as
\begin{equation}
\nonumber
\begin{aligned}
\mathcal{L}_{\mathrm{RSM}}(\theta)
& =\mathbb{E}_{t \sim \mathcal{U}[0,1], X_t \sim \hat{p}_t, X_0 \sim q_{0 \mid t}^*\left(X_0 \mid X_t\right)}\left[\left\|s_t^\theta\left(X_t\right)-s_{t \mid 0}\left(X_t \mid X_0\right)\right\|_2^2\right] \\
& =\mathbb{E}_{t \sim \mathcal{U}[0,1], X_t \sim \hat{p}_t, X_1 \sim q_{1 \mid t}^*\left(X_1 \mid X_t\right)}\left[\left\|s_t^\theta\left(X_t\right)-s_{t \mid 1}\left(X_t \mid X_1\right)\right\|_2^2\right]+ \mathrm{constant},
\end{aligned}
\end{equation}
where $\hat{p}_t$ is a proposal distribution for $X_t$.
Analogous to Proposition~\ref{prop:rfm_equiv}, we can derive noise-posterior and data-posterior variants by pushing the expectations inside the norm:
\begin{equation}
\label{eq:rsm_noise}
\mathcal{L}_{\mathrm{RSM-N}}(\theta)=\mathbb{E}_{t \sim \mathcal{U}[0,1], X_t \sim \hat{p}_t}\left[\left\|s_t^\theta\left(X_t\right)-\mathbb{E}_{X_0 \sim q_{0 \mid t}^*\left(X_0 \mid X_t\right)}\left[\frac{1}{\beta_t} \nabla_{x_0} \log p_0\left(x_0\right)\right]\right\|_2^2\right],
\end{equation}
\begin{equation}
\label{eq:rsm_data}
\mathcal{L}_{\mathrm{RSM-D}}(\theta)=\mathbb{E}_{t \sim \mathcal{U}[0,1], X_t \sim \hat{p}_t}\left[\left\|s_t^\theta\left(X_t\right)-\mathbb{E}_{X_1 \sim q_{1 \mid t}^*\left(X_1 \mid X_t\right)}\left[\frac{1}{\alpha_t} \nabla_{x_1} \log p_1\left(x_1\right)\right]\right\|_2^2\right].
\end{equation}
Note that, under linear interpolation and independent coupling, the conditional scores satisfy $s_{t \mid 1}\left(x_t \mid x_1\right)=\frac{1}{\beta_t} s_0\left(\frac{x_t-\alpha_t x_1}{\beta_t}\right)$ and $s_{t \mid 0}\left(x_t \mid x_0\right)=\frac{1}{\alpha_t} s_1\left(\frac{x_t-\beta_t x_0}{\alpha_t}\right)$, where $s_0=\nabla_{x_0} \log p_0(x_0)$ and $s_1=\nabla_{x_1} \log p_1(x_1)$ are the source and target scores, respectively. Additionally,
\begin{equation}
\nonumber
\E_{X_0 \sim q_{0 \mid t}^*\left(X_0 \mid X_t\right)}\left[f(X_0)\right]=\E_{X_1 \sim q_{1 \mid t}^*\left(X_1 \mid X_t\right)}\left[f\left(\frac{1}{\beta_t}X_t-\frac{\alpha_t}{\beta_t}X_1\right)\right].
\end{equation}

The equivalence between the derived objectives is stated in the following propositions.
\begin{proposition}
\label{prop:rsm_equiv}
The objectives $\mathcal{L}_{\mathrm{RSM}}(\theta), \mathcal{L}_{\mathrm{RSM-N}}(\theta)$ and $\mathcal{L}_{\mathrm{RSM-D}}(\theta)$ differ only by additive constants independent of $\theta$. Consequently, they share the same set of global minimizers and have identical gradients with respect to $\theta$.
\end{proposition}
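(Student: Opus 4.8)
The plan is to replay the bias--variance argument of Proposition~\ref{prop:rfm_equiv}, substituting the conditional score targets for the velocity targets and invoking the score identities recorded just above the statement. Fixing $t$ and $X_t$, I would take the inner conditional expectation over the relevant posterior (either $q_{0\mid t}^*$ or $q_{1\mid t}^*$) and expand the squared norm exactly as in \eqref{expand_rfm}: the cross term vanishes because the conditional mean of the residual $s_{t\mid 0}(X_t\mid X_0)-\E[s_{t\mid 0}(X_t\mid X_0)\mid X_t]$ (resp.\ for $s_{t\mid 1}$) is zero, and the leftover residual-squared term is a conditional variance that does not depend on $\theta$. This collapses $\mathcal{L}_{\mathrm{RSM}}$ to a regression of $s_t^\theta(X_t)$ onto a posterior-mean target, up to an additive $\theta$-independent constant and under the common weighting measure $\hat{p}_t$. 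The two already-simplified objectives $\mathcal{L}_{\mathrm{RSM\text{-}N}}$ in \eqref{eq:rsm_noise} and $\mathcal{L}_{\mathrm{RSM\text{-}D}}$ in \eqref{eq:rsm_data} carry deterministic targets in $X_t$, so no further decomposition is needed for them.

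The substantive step is to show that every one of these posterior-mean targets equals the single marginal score $s_t(X_t)$. For the two raw lines of $\mathcal{L}_{\mathrm{RSM}}$ this is immediate from the marginal--conditional relation $s_t(x)=\E[s_{t\mid 0}(X_t\mid X_0)\mid X_t=x]=\E[s_{t\mid 1}(X_t\mid X_1)\mid X_t=x]$, which holds because $q_{0,1\mid t}^*$ is exactly the Bayes posterior coinciding with the forward conditional law (the same identification used in Theorem~\ref{thm:rfm_cfm_same_minimizers}). For $\mathcal{L}_{\mathrm{RSM\text{-}N}}$, I would rewrite its target using $s_{t\mid 1}(x_t\mid x_1)=\tfrac{1}{\beta_t}s_0\!\big(\tfrac{x_t-\alpha_t x_1}{\beta_t}\big)$ together with the change-of-variables identity $\E_{X_0\sim q_{0\mid t}^*}[f(X_0)]=\E_{X_1\sim q_{1\mid t}^*}[f(\tfrac{1}{\beta_t}X_t-\tfrac{\alpha_t}{\beta_t}X_1)]$, obtaining $\E_{X_0}[\tfrac{1}{\beta_t}\nabla_{x_0}\log p_0(X_0)]=\E_{X_1}[s_{t\mid 1}(X_t\mid X_1)]=s_t(X_t)$; the data side follows symmetrically from $s_{t\mid 0}(x_t\mid x_0)=\tfrac{1}{\alpha_t}s_1\!\big(\tfrac{x_t-\beta_t x_0}{\alpha_t}\big)$.

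With all three objectives reduced to $\E_{X_t\sim\hat p_t}\big[\|s_t^\theta(X_t)-s_t(X_t)\|_2^2\big]$ up to additive $\theta$-independent constants, integrating over $t\sim\mathcal{U}[0,1]$ gives the claim: they share the same global minimizers and have identical gradients in $\theta$. An alternative, and perhaps cleaner, route to the key equality is to note that the two simplified targets differ precisely by a Langevin Stein control variate: the zero-mean property of the score (a special case of Lemma~\ref{scalar_stein}) gives $\E_{X_0\sim q_{0\mid t}^*}[\nabla_{x_0}\log q_{0\mid t}^*(X_0\mid x_t)]=0$, which combined with $\nabla_{x_0}\log q_{0\mid t}^*=s_0-\tfrac{\beta_t}{\alpha_t}s_1$ yields $\tfrac{1}{\beta_t}\E[s_0]=\tfrac{1}{\alpha_t}\E[s_1]$ directly. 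I expect the main obstacle to be the factor and change-of-variables bookkeeping in the second paragraph --- keeping the $\tfrac{1}{\alpha_t}$, $\tfrac{1}{\beta_t}$ prefactors and the reparametrization Jacobian consistent, and carefully justifying that the posterior mean of a conditional score equals the marginal score under $q^*$ rather than under the forward law --- since that is where sign or scaling slips are most likely.
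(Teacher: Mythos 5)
Your proposal follows essentially the same route as the paper's proof: a bias--variance decomposition of the quadratic loss conditional on $(t, X_t)$ (mirroring Proposition~\ref{prop:rfm_equiv}), followed by identifying the posterior-mean targets with those of $\mathcal{L}_{\mathrm{RSM\text{-}N}}$ and $\mathcal{L}_{\mathrm{RSM\text{-}D}}$ via the conditional-score and change-of-variables identities stated just before the proposition. Your additional observation that the equality of the two simplified targets can be read off from the zero-mean (Stein) property of the posterior score is a correct and pleasant extra consistency check, but it does not change the argument.
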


\begin{proof}
The proof mirrors that of Proposition~\ref{prop:rfm_equiv}. For a fixed $t$ and $X_t$, the target in $\mathcal{L}_{\mathrm{RSM}}$ is a random variable $S = s_{t\mid 1}(X_t \mid X_1)$ distributed according to the posterior coupling. The quadratic loss decomposes into the squared error relative to the posterior mean $\E[S \mid X_t]$ and the posterior variance $\text{Var}(S \mid X_t)$. Since the variance term is independent of $\theta$, minimizing $\mathcal{L}_{\mathrm{RSM}}$ is equivalent to regressing $s_t^\theta(X_t)$ onto $\E[S \mid X_t]$. By the specific forms of the conditional scores derived above, $\E[S \mid X_t]$ corresponds exactly to the targets in $\mathcal{L}_{\mathrm{RSM-N}}$. A similar argument applies to $\mathcal{L}_{\mathrm{RSM-D}}$.
\end{proof}

\begin{proposition}
Under the assumptions of Theorem~\ref{thm:rfm_cfm_same_minimizers}, the objectives $\mathcal{L}_{\mathrm{RSM-N}}(\theta)$, $\mathcal{L}_{\mathrm{RSM-D}}(\theta)$, and $\mathcal{L}_{\mathrm{CSM}}(\theta)$ share the same set of global minimizers.
\end{proposition}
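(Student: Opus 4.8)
The plan is to mirror the proof of Theorem~\ref{thm:rfm_cfm_same_minimizers} essentially line for line, replacing velocity fields with score functions. First I would invoke Proposition~\ref{prop:rsm_equiv} to reduce $\mathcal{L}_{\mathrm{RSM-N}}$ and $\mathcal{L}_{\mathrm{RSM-D}}$ to $\mathcal{L}_{\mathrm{RSM}}$ up to additive $\theta$-independent constants, so that it suffices to compare $\mathcal{L}_{\mathrm{RSM}}$ with $\mathcal{L}_{\mathrm{CSM}}$ directly.

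Next, fix $t$ and treat the conditional score target $S := s_{t\mid 1}(X_t\mid X_1)$ as a random variable. The key structural observation, identical to the flow case, is that the conditional law of the latent endpoint given $X_t$ under the forward construction (draw $X_0\sim p_0$, $X_1\sim p_1$, set $X_t = \alpha_t X_1 + \beta_t X_0$) coincides exactly with the posterior coupling $q^*_{0,1\mid t}(\cdot\mid X_t)$ used in reverse score matching. Consequently the posterior mean $\mathbb{E}[S\mid X_t = x]$ is the same object in both settings and equals the marginal score $s_t(x)$, by the defining relation $s_t(x) = \mathbb{E}[s_{t\mid 1}(X_t\mid X_1)\mid X_t = x]$.

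I would then combine two reductions. Using the analogue of Lemma~\ref{lem:grad_equiv} for scores (the well-established equivalence of $\mathcal{L}_{\mathrm{SM}}$ and $\mathcal{L}_{\mathrm{CSM}}$ up to a $\theta$-independent constant), $\mathcal{L}_{\mathrm{CSM}}$ shares its minimizers with the core objective $\mathbb{E}_{X_t\sim p_t}[\|s_t^\theta(X_t) - s_t(X_t)\|_2^2]$. In parallel, the bias--variance decomposition already invoked in Proposition~\ref{prop:rsm_equiv} shows that $\mathcal{L}_{\mathrm{RSM}}$ equals, up to a $\theta$-independent constant, $\mathbb{E}_{X_t\sim \hat p_t}[\|s_t^\theta(X_t) - s_t(X_t)\|_2^2]$. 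These two cores regress onto the identical target $s_t$, differing only in the weighting measure over $X_t$ ($p_t$ versus $\hat p_t$). Under the richness assumption the minimizers are exactly those $s_t^\theta$ agreeing with $s_t$ almost everywhere with respect to the respective measure; since the hypotheses of Theorem~\ref{thm:rfm_cfm_same_minimizers} include mutual absolute continuity of $p_t$ and $\hat p_t$, the two almost-everywhere conditions coincide, and hence so do the minimizer sets.

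The main obstacle I anticipate is not conceptual but bookkeeping of additive constants. Unlike $\mathcal{L}_{\mathrm{CFM}}$, the one-sided form of $\mathcal{L}_{\mathrm{CSM}}$ already carries an explicit ``$+\,\mathrm{constant}$'' term, so I must confirm that every constant entering the chain of equalities is genuinely independent of $\theta$: those from the SM/CSM equivalence, from pushing the posterior expectation inside the squared norm, and from the conditional-score reparameterizations $s_{t\mid 1}(x_t\mid x_1) = \frac{1}{\beta_t}s_0\!\left(\frac{x_t - \alpha_t x_1}{\beta_t}\right)$ and its data-side counterpart. Because such constants do not affect the set of minimizers, this tracking is benign, but it should be made explicit so the argument is airtight. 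A secondary point to verify is the posterior-mean identity $\mathbb{E}[s_{t\mid 1}(X_t\mid X_1)\mid X_t = x] = s_t(x)$ under $q^*_{0,1\mid t}$; this is immediate once the posterior conditional law is identified with the forward-construction conditional law, for which the marginal-score identity is the defining relation.
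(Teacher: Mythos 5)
Your proposal is correct and follows essentially the same route as the paper: both reduce the reverse and conditional objectives to regressions onto the same marginal score $s_t$, differing only in the weighting measure ($p_t$ versus $\hat p_t$), and then invoke mutual absolute continuity to equate the almost-everywhere minimizer conditions. Your version simply spells out the constant-tracking and the posterior-coupling identification more explicitly than the paper's brief sketch.
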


\begin{proof}
The proof mirrors that of Theorem~\ref{thm:rfm_cfm_same_minimizers}.
Both objectives regress $s_t^\theta(x)$ onto the marginal score $s_t(x)$. $\mathcal{L}_{\mathrm{CSM}}$ weighs the regression errors by the marginal density $p_t(x)$, while $\mathcal{L}_{\mathrm{RSM}}$ weighs them by the proposal density $\hat{p}_t(x)$. Since $p_t$ and $\hat p_t$ are mutually absolutely continuous, the sets of global minimizers coincide.
\end{proof}

The variance reduction techniques proposed in this paper, such as Langevin Stein operators and control variates, are directly applicable to the expectations in \eqref{eq:rsm_noise} and \eqref{eq:rsm_data}, ensuring efficient training of the score function.

\section{Algorithm Details}
\label{app:alg_details}
We provide details for SNIS computation.
The score of noise posterior is given by
\begin{equation}
\nonumber
s_{0 \mid t}^*\left(u_0, u_t, s\right)=\nabla_{u_0} \log p_0\left(u_0\right)-\frac{1}{\lambda}\frac{\beta_t}{\alpha_t} \nabla_{u_1} Q\left(s, \tanh \left(u_1\right)\right)+\frac{2 \beta_t}{\alpha_t} \tanh \left(u_1\right),
\end{equation}
where $u_1=\frac{u_t-\beta_t u_0}{\alpha_t}$. Here \(\nabla_{u_1}Q(s,\tanh(u_1))\) denotes the gradient with respect to the latent variable \(u_1\), so the derivative of the \(\tanh\) map is included.
We use a standard Gaussian source $p_0=\mathcal{N}(0,I_d)$, so $\nabla_{u_0}\log p_0(u_0)=-u_0$. We have
\begin{equation}
\nonumber
\mu_{0 \mid t}\left(u_t, s\right)=\mathbb{E}_{u_0 \sim q_{0 \mid t}^*\left(\cdot \mid u_t, s\right)}\left[u_0+\operatorname{diag}(\Lambda) s_{0 \mid t}^*\left(u_0, u_t, s\right)\right].
\end{equation}
To estimate $\mu_{0\mid t}(u_t,s)$, we use SNIS with control variates as in \eqref{eq:snis_cv}. Specifically, we draw \(K\) samples \(\{u_0^{(i)}\}_{i=1}^K \sim p_0\), and set $u_1^{(i)} = \frac{u_t-\beta_t u_0^{(i)}}{\alpha_t}$.
We form the importance weights $\tilde{w}^{(i)}=\exp \left(\frac{1}{\lambda} Q\left(s, \tanh \left(u_1^{(i)}\right)\right)\right) \prod_{j=1}^d \operatorname{sech}^2\left(u_{1, j}^{(i)}\right)$ and the normalized weights $w^{(i)}=\frac{\tilde w^{(i)}}{\sum_{j=1}^K \tilde w^{(j)}}$.
Then
\begin{equation}
\nonumber
\hat\mu[u_0 \mid t, u_t, s; \Lambda]
=
\sum_{i=1}^K
w^{(i)}
\left(
u_0^{(i)}+\operatorname{diag}(\Lambda)s_{0 \mid t}^*\left(u_0^{(i)}, u_t, s\right)
\right).
\end{equation}
The vector $\Lambda$ can be estimated from the same samples using Proposition~\ref{prop:opt_gamma} and \eqref{eq:gamma_star}.
Concretely, we compute the weighted means $\hat u_0=\sum_{i=1}^K w^{(i)}u_0^{(i)}$ and $\hat s=\sum_{i=1}^K w^{(i)} s^*_{0\mid t}(u_0^{(i)},u_t,s)$, then estimate the $j$-th element of $\Lambda$ as
\begin{equation}
\nonumber
\hat\Lambda_j
=
-
\frac{
\sum_{i=1}^K
\left(w^{(i)}\right)^2
\left(u_{0,j}^{(i)}-\hat u_{0,j}\right)
\left(s_j^{(i)}-\hat s_j\right)
}{
\sum_{i=1}^K
\left(w^{(i)}\right)^2
\left(s_j^{(i)}-\hat s_j\right)^2
+
r
},
\label{eq:cv_weight_plug_in}
\end{equation}
where $j=1,\cdots,d$, $s^{(i)}$ is shorthand for $s^*_{0\mid t}(u_0^{(i)},u_t, s)$, and $r>0$ is a small ridge term.

The overall procedure is summarized in Algorithm~\ref{RFM_RL}. Note that we abuse notation slightly by using $\beta$ for the learning rate, which is unrelated to the schedule $\beta_t$.

\begin{algorithm}[h]
    \caption{Online Reinforcement Learning with Reverse Flow Matching}
    \KwIn{network parameters $\theta$, $\omega_1$, $\omega_2$; target network parameters $\bar{\omega}_1\leftarrow\omega_1$, $\bar{\omega}_2\leftarrow\omega_2$; temperature $\lambda$; learning rate $\beta$; target smoothing coefficient $\tau$; replay buffer $\mathcal{D}\leftarrow\varnothing$.}
    \For{each iteration}{
        \For{each environment step}{
            Sample action $a\sim \pi^\theta(\cdot\mid s)$ and execute it in the environment\;
            Observe next state $s'$ and reward $r$\;
            Store transition $\mathcal{D}\leftarrow \mathcal{D}\cup\{(s,a,r,s')\}$.
        }
        \For{each gradient step}{
            Sample a mini-batch from $\mathcal{D}$\;
            Update critics $\omega_i \leftarrow \omega_i-\beta \nabla_{\omega_i} \mathcal{L}_{Q}(\omega_i)$ for $i\in\{1,2\}$\;
            Update actor $\theta \leftarrow \theta-\beta \nabla_\theta \mathcal{L}_{\pi}(\theta)$\;
            Update target networks $\bar{\omega}_i \leftarrow \tau \omega_i+(1-\tau) \bar{\omega}_i$ for $i\in\{1,2\}$.
        }
    }
\label{RFM_RL}
\end{algorithm}

\section{Experiment Details}
\label{app:exp_details}
\subsection{Toy Example}

\paragraph{Target distribution.}
We consider a two-moon target distribution $p_1(x)\propto \exp(-E(x)/\lambda)$, where $x=[x_1, x_2]^\top\in\mathbb{R}^2$. The temperature $\lambda = 1$ and the energy function is defined as
\begin{equation*}
    E(x) = \frac{1}{2} \left( \frac{\|x\|_2 - 2}{0.2} \right)^2 - \log \left( \text{exp}\left[ -\frac{1}{2} \left( \frac{x_1 - 2}{0.3} \right)^2 \right] + \text{exp}\left[ -\frac{1}{2} \left( \frac{x_1 + 2}{0.3} \right)^2 \right] \right).
\end{equation*}

\paragraph{RFM instantiation.}
We adopt the standard linear schedule $\alpha_t=t$ and $\beta_t=1-t$ with $t\in[t_{\min},1]$ where $t_{\min}=0.02$. Under this schedule, the conditional velocity is
$v_{t\mid 0,1}(x_t\mid x_0,x_1)=\dot\alpha_t x_1+\dot\beta_t x_0 = x_1-x_0$. Therefore, RFM trains a velocity network $v_t^\theta(x_t)$ by regressing onto a posterior mean estimate of $(x_1-x_0)$, as described in \eqref{rfm_loss}.

\paragraph{Learned proposal for noise and data posterior sampling.}
To enable efficient SNIS for posterior mean estimation, we fit a diagonal-covariance Gaussian mixture proposal $\bar p_1$ to the unnormalized target $\tilde p_1$ by maximizing the variational objective
\[
\mathbb{E}_{x\sim \bar p_1}\big[\log \tilde p_1(x) - \log \bar p_1(x)\big],
\]
where $\log \tilde p_1(x) = -E(x)/\lambda$. Maximizing it is equivalent to minimizing the reverse KL divergence $\mathrm{KL}(\bar p_1 \,\|\, p_1)$.
This leads to closed-form GMM proposal distributions 
$\bar q_{1\mid t}(x_1\mid x_t)\propto \bar p_1(x_1)\,p_0\!\big((x_t-\alpha_t x_1)/\beta_t\big)$ and
$\bar q_{0\mid t}(x_0\mid x_t)\propto p_0(x_0)\,\bar p_1\!\big((x_t-\beta_t x_0)/\alpha_t\big)$. Using these proposal distributions for SNIS can help improve the effective sample size.

\paragraph{Metrics.}
We use three metrics to quantitatively evaluate the quality of samples generated by the learned flow and diffusion models. These metrics capture complementary notions of distributional discrepancy between generated and ground-truth samples.
\begin{itemize}
    \item \textbf{Sliced Wasserstein distance (SWD).}
    We approximate the sliced Wasserstein distance by projecting samples onto 50 random one-dimensional directions and averaging the resulting one-dimensional Wasserstein distances. This provides an efficient proxy for Wasserstein discrepancies in the toy example.

    \item \textbf{Squared maximum mean discrepancy (MMD$^2$).}
    We report MMD$^2$ with an RBF kernel as a nonparametric measure of discrepancy between generated and ground-truth samples. To ensure comparability across methods and checkpoints, we fix the kernel bandwidth using the median heuristic computed once from a large set of ground-truth samples, and use the same bandwidth for all evaluations.

    \item \textbf{Sinkhorn distance.}
    We compute the entropically regularized optimal transport cost between the empirical distributions of generated and reference samples. We use the squared Euclidean cost with a fixed regularization coefficient $\varepsilon=10^{-3}$, and evaluate the Sinkhorn distance using 2,000 samples from each distribution.
\end{itemize}

\subsection{RL Tasks}

\paragraph{Hyperparameters.}

We follow the official open-source implementations of DQS, QSM, MaxEntDP, and QVPO. For SAC, we follow the CleanRL implementation (https://github.com/vwxyzjn/cleanrl). To ensure a fair comparison, we integrate all methods into a unified JAX codebase.
The shared hyperparameters are summarized in Table~\ref{tab:shared_hyperparameters}. For RFM, we fix the temperature at $\lambda=0.02$ across all environments and use 100 Monte Carlo samples for posterior mean estimation.

\begin{table}[ht]
  \centering
  \caption{Shared hyperparameters.}
  \label{tab:shared_hyperparameters}
  \begin{tabular}{lcccccc}
  \toprule
  Hyperparameter & RFM & DQS & MaxEntDP & QSM & QVPO & SAC \\
  \midrule
  Batch size & 256 & 256 & 256 & 256 & 256 & 256 \\
  Discount factor $\gamma$ & 0.99 & 0.99 & 0.99 & 0.99 & 0.99 & 0.99 \\
  Target smoothing coefficient $\tau$ & 0.005 & 0.005 & 0.005 & 0.005 & 0.005 & 0.005 \\
  Number of hidden layers & 2 & 2 & 2 & 2 & 2 & 2 \\
  Number of hidden units & 256 & 256 & 256 & 256 & 256 & 256 \\
  Actor learning rate & 3e-4 & 3e-4 & 3e-4 & 3e-4 & 3e-4 & 3e-4 \\
  Critic learning rate & 1e-3 & 1e-3 & 1e-3 & 1e-3 & 1e-3 & 1e-3 \\
  Replay buffer size & 2.5e5 & 2.5e5 & 2.5e5 & 2.5e5 & 2.5e5 & 2.5e5 \\
  Diffusion/flow steps & 10 & 20 & 20 & 20 & 20 & N/A \\
  Number of action candidates & 32 & N/A & 32 & N/A & 32 & N/A \\
  \bottomrule
  \end{tabular}
\end{table}

\paragraph{Training Time.}
All experiments were conducted on a desktop equipped with an NVIDIA RTX 5090 GPU and an Intel Core Ultra 9 285K CPU. The average training time on the \texttt{walker-run} environment is reported in Table~\ref{tab:training_time}.

\begin{table}[ht]
  \centering
  \caption{Comparison of training time.}
  \label{tab:training_time}
  \begin{tabular}{lcccccc}
    \toprule
    Algorithm & RFM & DQS & MaxEntDP & QSM & QVPO & SAC \\
    \midrule
    Training time (minutes)       & 15           & 33 & 30   & 10 & 24 & 8 \\
    \bottomrule
  \end{tabular}
\end{table}

\section{Additional Experiments}
\label{app:additional}

\subsection{Sensitivity Analyses}

We analyze how different hyperparameter settings affect performance. The results are summarized in Figure~\ref{sensitivity}. Our algorithm is robust to the choice of $K$ (number of Monte Carlo samples) and $N$ (number of flow steps). It is also insensitive to $M$ (number of action candidates during sampling), as long as $M$ is not too small. In the main experiments, we set $K=100$, $M=32$, and $N=10$.

\begin{figure}[htbp]
  \centering
  \includegraphics[width=6.8in]{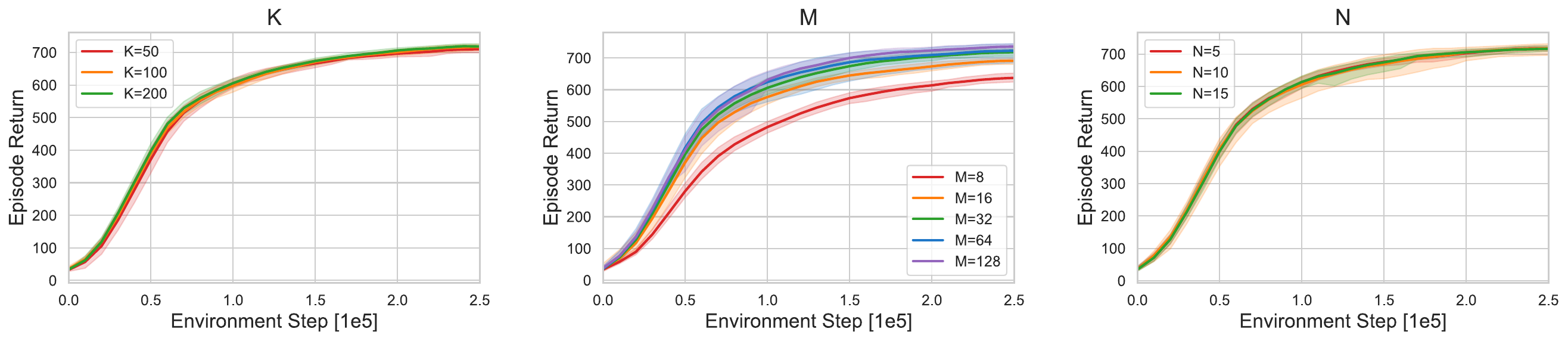}
  \caption{Training curves of RFM on \texttt{walker-run} environment under different hyperparameter settings. Left: experiments on different numbers of Monte Carlo samples $K$ for posterior mean estimation. Middle: experiments on different numbers of action candidates $M$ during sampling. Right: experiments on different flow steps during sampling.}
  \label{sensitivity}
\end{figure}

\subsection{Ablation Studies}

We analyze how different design choices affect performance on both the toy example and the RL tasks. First, for coefficients of control variates, our RFM algorithm estimates \(\Lambda \in \mathbb{R}^d\) from samples. As an ablation, we fix the coefficients to be isotropic, setting \(\Lambda_j \equiv \eta\) for \(j=1,\ldots,d\), where \(\eta=0\) recovers the noise-expectation form and \(\eta=1\) recovers the gradient-expectation form. Recall that $ \mu_{0 \mid t}(x_t) = (1-\eta)\mathbb{E}[X_0] + \eta \, \mathbb{E}\left[ -\frac{1}{\lambda}\frac{\beta_t}{\alpha_t} \nabla_{x_1} Q(x_1) \right]$. We compare these two fixed variants with RFM on the \texttt{finger-turn\_hard} environment. As shown in Figure~\ref{RL_ablation}, the two fixed variants perform similarly to each other, but both are inferior to RFM. We observe the same pattern in the toy example (Figure~\ref{toy_ablation_cv}): under the same computational budget, RFM produces visibly and quantitatively better samples than the two fixed variants. Note that, in the toy example, all three methods can perform well with sufficient training; thus, the comparison is intended to highlight performance under limited computation.

\begin{figure}[htbp]
  \centering
  \includegraphics[width=6.8in]{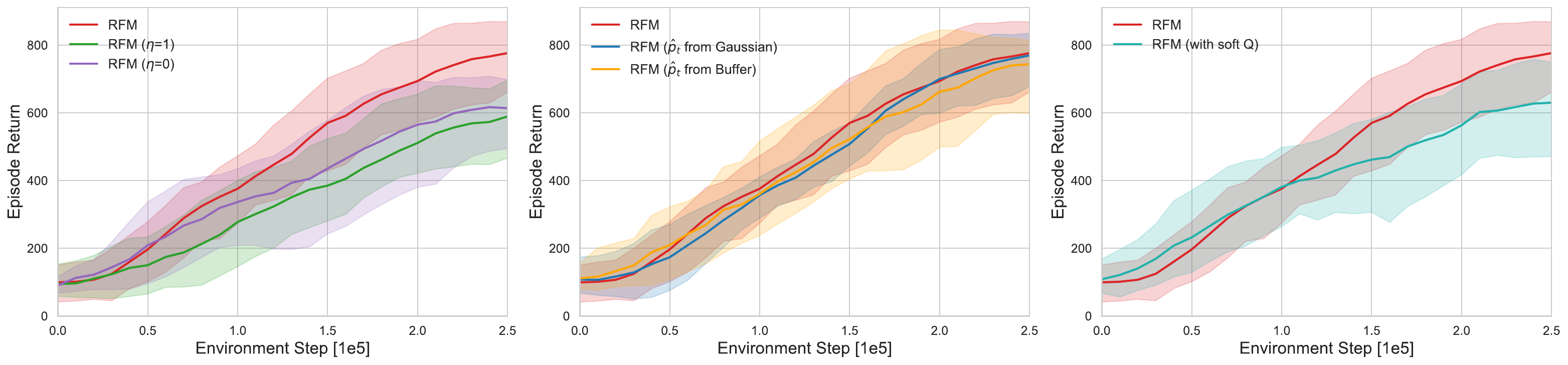}
  \caption{Training curves of RFM on \texttt{finger-turn\_hard} environment under different design choices. Left: experiments on the effectiveness of control variates. Middle: experiments on choices of proposal distribution $\hat{p}_t$. Right: experiments on whether to use soft $Q$-function.}
  \label{RL_ablation}
\end{figure}

\begin{figure}[htbp]
  \centering
  \includegraphics[width=6.8in]{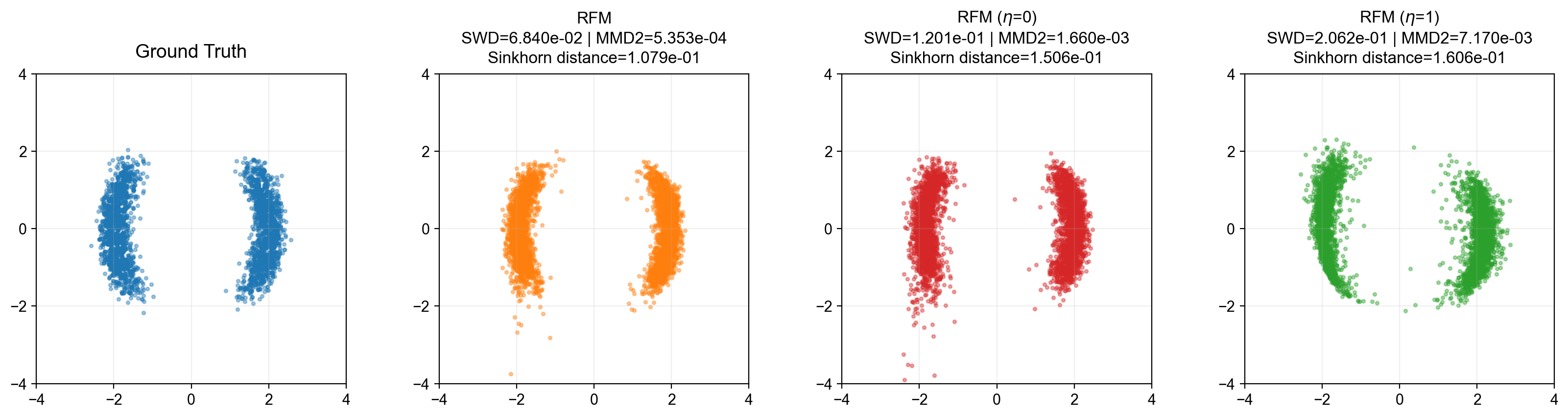}
  \caption{Experiments on the effectiveness of control variates in the toy example.}
  \label{toy_ablation_cv}
\end{figure}

Second, we study the influence of proposal distribution \(\hat{p}_t\). By Theorem~\ref{thm:rfm_cfm_same_minimizers}, the key requirement on \(p_t\) and \(\hat{p}_t\) is mutual absolute continuity, namely that they have the same support. This assumption is satisfied in our experiments. In the toy example, the data domain is \(\mathbb{R}^d\), so both \(p_t\) and \(\hat{p}_t\) have support on \(\mathbb{R}^d\). In the RL experiments, we learn the flow in an unconstrained latent space \(u \in \mathbb{R}^d\) and map latents to actions through \(a = \tanh(u)\); consequently, both \(p_t\) and \(\hat{p}_t\) have support on \(\mathbb{R}^d\) in the latent space. For the RL experiments, we evaluate three choices of \(\hat{p}_t\) on the \texttt{finger-turn\_hard} environment, as shown in Figure~\ref{RL_ablation}. The first choice, \emph{from policy}, is the default in RFM: we sample \(u_1\) from the current flow policy, sample \(u_0\) from a standard Gaussian, and linearly interpolate them to obtain \(u_t\). The second choice, \emph{from buffer}, samples an action \(a\) from the replay buffer, maps it to the latent space via \(u_1 = \operatorname{arctanh}(a)\), samples \(u_0\) from a standard Gaussian, and then interpolates to obtain \(u_t\). The third choice, \emph{from Gaussian}, samples \(u_t\) directly from a standard Gaussian. In the toy example, where no replay buffer is used, we compare the \emph{from policy} and \emph{from Gaussian} choices (Figure~\ref{toy_ablation_pt}). Empirically, we observe no major performance difference among these choices in either setting. This is consistent with the theoretical result that the global optimizer does not depend on \(\hat{p}_t\), and it suggests that training is robust to the choice of \(\hat{p}_t\) in our tasks.

\begin{figure}[htbp]
  \centering
  \includegraphics[width=5in]{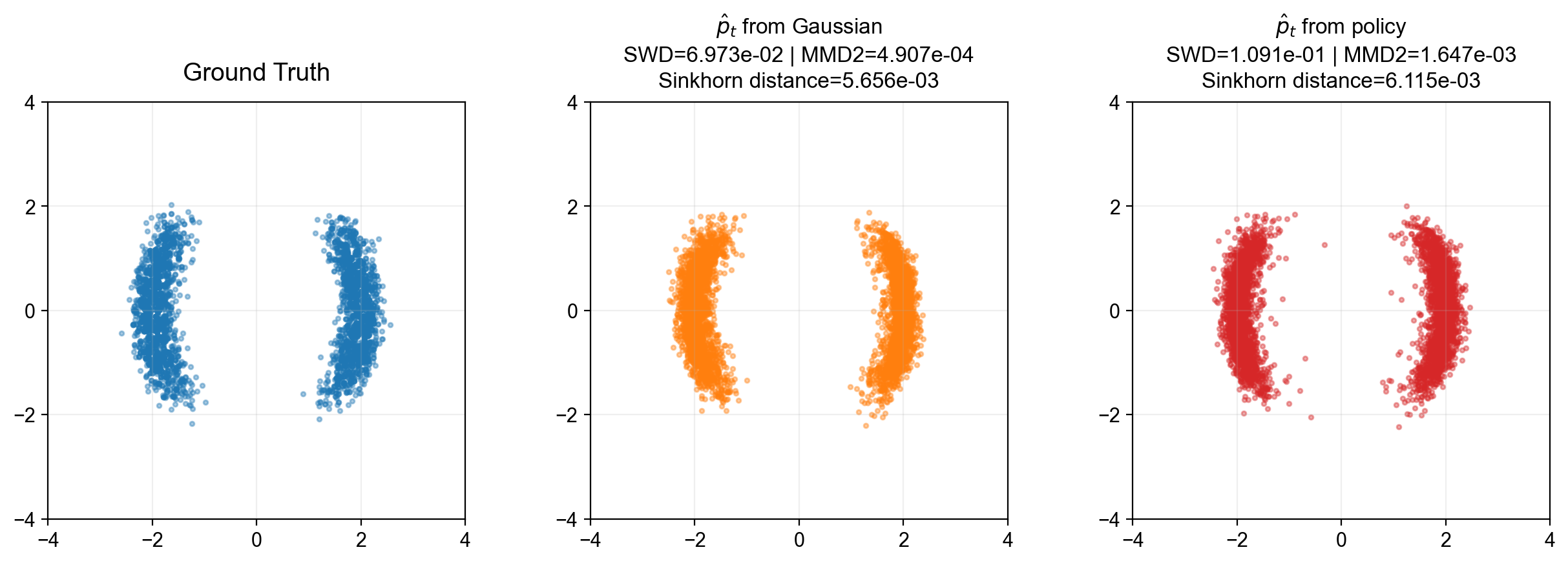}
  \caption{Experiments on choices of proposal distribution $\hat{p}_t$ in the toy example.}
  \label{toy_ablation_pt}
\end{figure}

Third, we study the effect of using a soft \(Q\)-function. In RFM, we use the standard \(Q\)-function as the critic, rather than the soft \(Q\)-function commonly used in maximum-entropy RL~\cite{haarnoja2018soft}. This choice is mainly computational: for flow policies, evaluating the action log-density typically requires tracking the log-Jacobian along the ODE trajectory, for example via an augmented state, or performing an additional backward ODE integration, both of which increase training cost. To assess this design choice, we compare RFM with a variant that uses the soft \(Q\)-function during training. In this soft-\(Q\) variant, we additionally incorporate the log-likelihood term and estimate it using Hutchinson's trace estimator. The training curves are shown in Figure~\ref{RL_ablation}. Empirically, RFM outperforms the soft-\(Q\) variant, and incorporating the log-likelihood term slightly degrades performance. One possible explanation is that likelihood computation in ODE-based generative models introduces additional numerical difficulty and instability.

\subsection{Extended Comparisons}

We further compare RFM with two additional baselines: (1) soft diffusion actor-critic (SDAC)~\cite{ma2025efficient}, which belongs to the noise-expectation family for training diffusion policies to sample from Boltzmann distributions; and (2) diffusion-based maximum entropy RL (DIME)~\cite{celik2025dime}, which backpropagates through the sampling process to optimize a lower bound on the maximum-entropy objective. The results on the \texttt{finger-turn\_hard} and \texttt{walker-run} environments are shown in Figure~\ref{sdac_dime}. Both baselines are outperformed by RFM. In particular, DIME exhibits unstable training, likely due to the long backpropagation chain through the sampling process.

\begin{figure}[htbp]
  \centering
  \includegraphics[width=5in]{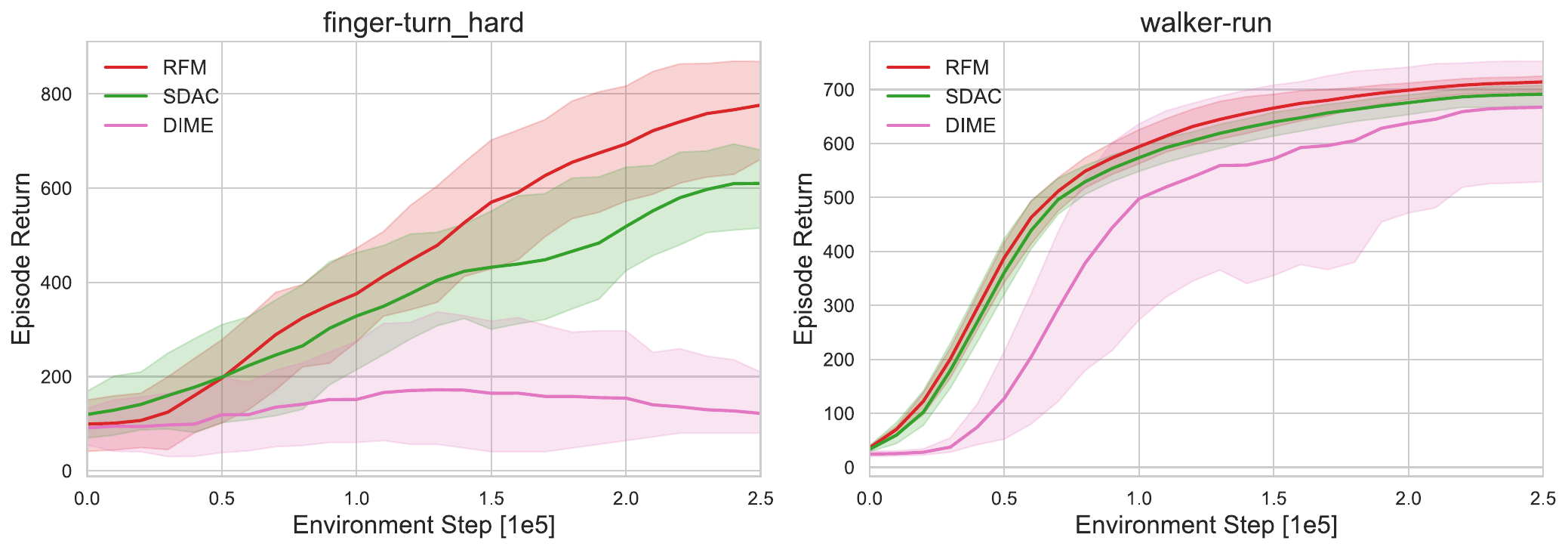}
  \caption{Training curves of RFM, SDAC, and DIME on \texttt{finger-turn\_hard} and \texttt{walker-run} environments.}
  \label{sdac_dime}
\end{figure}

\end{document}